\documentclass{article}





\PassOptionsToPackage{numbers, compress}{natbib}
\usepackage[preprint]{neurips_2023}

\usepackage[utf8]{inputenc} 
\usepackage[T1]{fontenc}    
\usepackage{hyperref}       
\usepackage{url}            
\usepackage{booktabs}       
\usepackage{amsfonts}       
\usepackage{nicefrac}       
\usepackage{microtype}      

\usepackage{lipsum}
\newcommand\blfootnote[1]{%
  \begingroup
  \renewcommand\thefootnote{}\footnote{#1}%
  \addtocounter{footnote}{-1}%
  \endgroup
}
\usepackage{float}
\usepackage{wrapfig}
\usepackage{xspace}
\usepackage{enumitem}
\usepackage{graphicx}
\usepackage{amsthm,amsmath,amssymb}
\DeclareMathOperator{\E}{\mathbb{E}}

\DeclareMathOperator{\Var}{Var}

\title{\discount: Counting in Large Image Collections with Detector-Based Importance Sampling}

%

\author{%
  Gustavo~Perez \hspace{25pt} Subhransu Maji$^*$ \hspace{25pt} Daniel Sheldon$^*$\\
  Manning College of Information \& Computer Sciences\\
  University of Massachusetts, Amherst\\
  \texttt{\{gperezsarabi, smaji, sheldon\}@cs.umass.edu} \\
}

\usepackage[x11names,table]{xcolor}

\newcommand{\discount}{\textsc{DISCount}\xspace}

\newcommand{\Unif}{\text{Unif}}
\DeclareMathOperator{\I}{\mathbb{I}}
\newcommand{\MC}{\text{MC}}
\newcommand{\IS}{\text{IS}}
\newcommand{\DIS}{\text{DIS}}
\newcommand{\kDIS}{k\text{DIS}}

\newcommand{\CAL}{\text{CAL}}
\newcommand{\toP}{\stackrel{P}{\longrightarrow}}
\newcommand{\toD}{\stackrel{D}{\longrightarrow}}

\begin{document}

\maketitle

\begin{abstract}

Many modern applications use computer vision to detect and count objects in massive image collections. However, when the detection task is very difficult or in the presence of domain shifts, the counts may be inaccurate even with significant investments in training data and model development. We propose \discount -- a detector-based importance sampling framework for counting in large image collections that integrates an imperfect detector with human-in-the-loop screening to produce unbiased estimates of counts.  
We propose techniques for solving counting problems over multiple spatial or temporal regions using a small number of screened samples and estimate confidence intervals. This enables end-users to stop screening when estimates are sufficiently accurate, which is often the goal in a scientific study. On the technical side we develop variance reduction techniques based on control variates and prove the (conditional) unbiasedness of the estimators. 
\discount leads to a 9-12$\times$ reduction in the labeling costs over naive screening for tasks we consider, such as counting birds in radar imagery or estimating damaged buildings in satellite imagery, and also surpasses alternative covariate-based screening approaches in efficiency.
\blfootnote{$^*$equal advising contribution}
\end{abstract}

\section{Introduction}\label{sec:introduction}

Many modern applications use computer vision to detect and count objects in massive image collections. For example, we are interested in applications that involve counting bird roosts in radar images and damaged buildings in satellite images. The image collections are too massive for humans to solve these tasks in the available time. Therefore, a common approach is to train a computer vision detection model and run it exhaustively on the images.

The task is interesting because the goal is not to generalize, but to achieve the scientific counting goal with sufficient accuracy for a \emph{fixed} image collection. The best use of human effort is unclear: it could be used for model development, labeling training data, or even directly solving the counting task!
A particular challenge occurs when the detection task is very difficult, so the accuracy of counts made on the entire collection is questionable even with huge investments in training data and model development.
Some works resort to human screening of the detector outputs~\cite{norouzzadeh2018,nurkarim2023,perez2022}, 
which saves time compared to manual counting but is still very labor intensive.

These considerations motivate \emph{statistical} approaches to counting. Instead of screening the detector outputs for all images, a human can ``spot-check'' some images to estimate accuracy, and, more importantly, use statistical techniques to obtain unbiased estimates of counts across unscreened images. In a related context, \citet{iscount} proposed IS-count, which uses importance sampling to estimate total counts across a collection when (satellite) images are expensive to obtain by using spatial covariates to sample a subset of images.

We contribute counting methods for large image collections that build on IS-count in several ways. 
First, we work in a different model where images are freely available and it is possible to train a detector to run on all images, but the detector is not reliable enough for the final counting task, or its reliability is unknown. We contribute human-in-the-loop methods for count estimation using the detector to construct a proposal distribution, as seen in Fig.~\ref{fig:splash}. 
Second, we consider solving multiple counting problems---for example, over disjoint or overlapping spatial or temporal regions---simultaneously, which is very common in practice. We contribute a novel sampling approach to obtain simultaneous estimates, prove their (conditional) unbiasedness, and show that the approach allocates samples to regions in a way that approximates the optimal allocation for minimizing variance.
Third, we design confidence intervals, which are important practically to know how much human effort is needed. Fourth, we use variance reduction techniques based on control variates.  

Our method produces unbiased estimates and confidence intervals with reduced error compared to covariate-based methods. In addition, the labeling effort is further reduced with \discount as we only have to verify detector predictions instead of producing annotations from scratch. 
On our tasks, \discount leads to a 9-12$\times$ reduction in the labeling costs over naive screening and 6-8$\times$ reduction over IS-Count.
Finally, we show that solving multiple counting problems jointly can be done more efficiently than solving them separately, demonstrating a more efficient use of samples.

\begin{figure}
    \centering
    \includegraphics[width=\linewidth]{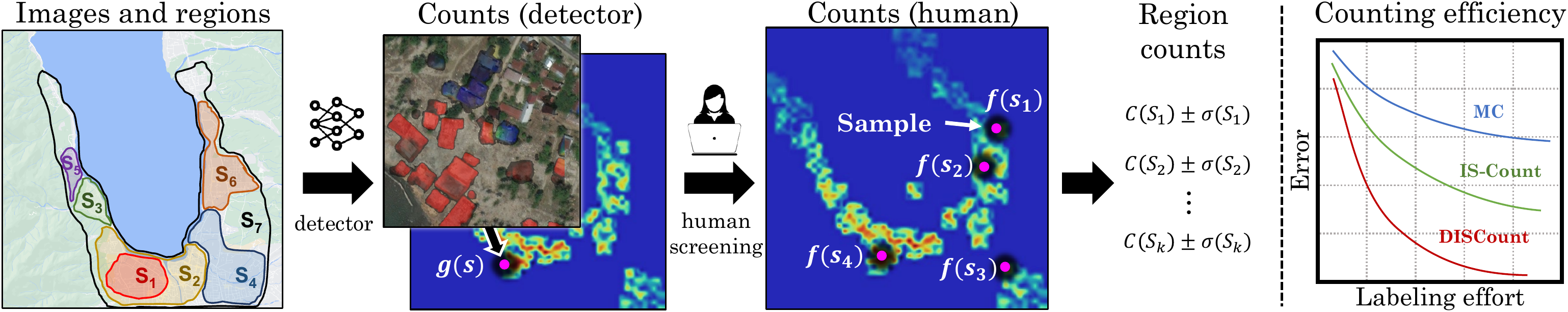}\\
    \hspace{5pt} {\small \textbf{(a)}} \hspace{80pt} {\small \textbf{(b)}} \hspace{80pt} {\small \textbf{(c)}} \hspace{120pt} {\small \textbf{(d)}}
    \caption{\textbf{$k$-\discount} uses detector-based importance sampling to screen counts and solve multiple counting problems. \textbf{(a)} Geographical regions where we want to estimate counts of damaged buildings. \textbf{(b)} Outputs of a damaged building detector on satellite imagery, which can be used to estimate counts $g(s)$ for each tile (shows as dots). \textbf{(c)} Tiles selected for human screening to obtain true counts $f(s)$, from which counts for all regions are joinly estimated by $k$-\discount. \textbf{(d)} Our experiments show that \discount outperforms naive (MC) and covariate-based sampling (IS-Count~\cite{iscount}).
    }
    \label{fig:splash}
\end{figure}

\section{Related Work}\label{sec:relatedwork}

Computer vision techniques have been deployed for counting in numerous applications where exhaustive human-labeling is expensive due to the sheer volume of imagery involved. This includes areas such as detecting animals in camera trap imagery~\cite{norouzzadeh2018,tuia2022perspectives}, counting buildings, cars, and other structures in satellite images~\cite{nurkarim2023,cavender2022integrating,burke2021using,leitloff2010vehicle}, species monitoring in citizen science platforms~\cite{tuia2022perspectives,van2018inaturalist}, monitoring traffic in videos~\cite{won2020intelligent,coifman1998real}, as well as various medicine, science and engineering applications.
For many applications the cost associated with training an \emph{accurate} model is considerably less than that of meticulously labeling the entire dataset. Even with a less accurate model, human-in-the-loop recognition strategies have been proposed to reduce annotation costs by integrating human validation with noisy predictions~\cite{branson2010visual,wah2014similarity}.

Our approach is related to work in active learning~\cite{settles2009active} and semi-supervised learning~\cite{chapelle2009semi}, where the goal is to reduce human labeling effort to learn models that generalize on i.i.d. held out data. 
While these approaches reduce the cost of labels on training data, they often rely on large labeled test sets to estimate the performance of the model, which can be impractical. 
Active testing~\cite{nguyen2018,kossen2021} aims to reduce the cost of model evaluation by providing a statistical estimate of the performance using a small number of labeled examples.
Unlike traditional learning where the goal is performance on held out data, the goal of active testing is to estimate performance on a \emph{fixed} dataset.
Similarly, our goal is to estimate the counts on a fixed dataset, but different from active testing we are interested in estimates of the true counts and not the model’s performance. 
In particular, we want unbiased estimates of counts even when the detector is unreliable. 
Importantly, since generalization is not the goal, overfitting to the dataset statistics may lead to more accurate estimates.

Statistical estimation has been widely used to conduct surveys (e.g., estimating population demographics, polling, etc.)~\cite{cochran1977sampling}. 
In IS-Count~\cite{iscount}, the authors propose an importance sampling approach to estimate counts in large image collections using humans-in-the-loop.
They showed that one can count the number of buildings at the continental scale by sampling a small number of regions based on covariates such as population density and annotating those regions, thereby reducing the cost of obtaining high-resolution satellite imagery and human labels.
However, for many applications the dataset is readily available, and running the detector is cost effective, but human screening is expensive.
To address this, we propose using the detector to guide the screening process and demonstrate that this significantly reduces error rates in count estimation given a fixed amount of human effort.
Furthermore, for some applications, screening the outputs of a detector can be significantly faster than to annotate from scratch, leading to additional savings. 

An interesting question is what is the best way to utilize human screening effort to count on a dataset. For example, labels might be used to improve the detector, measure performance on the deployed dataset, or, as is the case in our work, to derive a statistical estimate of the counts. Our work is motivated by problems where improving the detector might require significant effort, but counts from the detector are correlated with true counts and can be used as a proposal distribution for sampling.

\section{\discount: Detector-based IS-Count}\label{sec:method}

Consider a counting problem in a discrete domain $\Omega$ (usually spatiotemporal) with elements $s \in \Omega$  that represent a single unit such as an image, grid cell, or day of year.
For each $s$ there is a ground truth ``count'' $f(s) \geq 0$, which can be any non-negative measurement, such as the number or total size of all objects in an image.
A human can label the underlying images for any $s$ to obtain $f(s)$. 

Define $F(S) = \sum_{s \in S} f(s)$ to be the cumulative count for a region $S$.
We wish to estimate the total counts $F(S_1), \ldots, F(S_k)$ for $k$ different subsets $S_1, \ldots, S_k \subseteq \Omega$, or \emph{regions}, while using human effort as efficiently as possible.
The regions represent different geographic divisions or time ranges and may overlap --- for example, in the roost detection problem we want to estimate \emph{cumulative} counts of birds for each day of the year, while disaster-relief planners want to estimate building damage across different geographical units such as towns, counties, and states.
Assume without loss of generality that $\bigcup_{i=1}^k S_i = \Omega$, otherwise the domain can be restricted so this is true.

We will next present our methods; derivations and proofs of all results are found in the appendix.

\subsection{Single-Region Estimators}

Consider first the problem of estimating the total count $F(S)$ for a single region $S$. \citet{iscount} studied this problem in the context of satellite imagery, with the goal of minimizing the cost of purchasing satellite images to obtain an accurate estimate.

\paragraph{Simple Monte Carlo~\cite{iscount}} This is a baseline based on simple Monte Carlo sampling. Write $F(S) = \sum_{s \in S}f(s) = |S| \cdot \E_{s \sim \text{Unif}(S)}[f(s)]$. Then the following estimator, which draws $n$ random samples uniformly in $S$ to estimate the total, is unbiased:
\[
    \hat F_{\MC}(S) = |S| \cdot \frac{1}{n} \sum_{i=1}^nf(s_i), \quad s_i \sim \Unif(S).
\]
\paragraph{IS-Count~\cite{iscount}} \citeauthor{iscount} then proposed an estimator based on importance sampling~\cite{mcbook}. Instead of sampling uniformly, the method samples from a proposal distribution $q$ that is cheap to compute for all $s \in S$. For example, to count buildings in US satellite imagery, the proposal distribution could use maps of artificial light intensity, which are freely available. The importance sampling estimator is:
\[
\hat F_{\IS}(S) = \frac{1}{n} \sum_{i=1}^n \frac{f(s_i)}{q(s_i)}, \quad  s_i \sim q.
\]
\paragraph{\discount} IS-count assumes \emph{images} are costly to obtain, which motivates using external covariates for the proposal distribution. 
However, in many scientific tasks, the images are readily available, and the key cost is that of human supervision.
In this case it is possible to train a detection model and run it on all images to produce an approximate count $g(s)$ for each $s$. 
Define $G(S) = \sum_{s \in S}g(s)$ to be the approximate detector-based count for region $S$. 
We propose the \emph{detector-based IS-count} ("\discount{}") estimator, which uses the proposal distribution proportional to $g$ on region $S$, i.e., with density $\bar g_S(s) = g(s)\I[s \in S]/G(S)$. 
The importance-sampling estimator then specializes to:
\[
\hat F_{\DIS}(S) = G(S) \cdot \frac{1}{n} \sum_{i=1}^n \frac{f(s_i)}{g(s_i)}, \quad  s_i \sim \bar g_S.
\]
To interpret \discount, let $w_i = f(s_i)/g(s_i)$ be the ratio of the true count to the detector-based count for the $i$th sample $s_i$ or (importance) \emph{weight}. \discount reweights the detector-based total count $G(S)$ by the average weight $\bar w = \frac{1}{n} \sum_{i=1}^n w_i$, which can be viewed as a correction factor based on the tendency to over- or under-count, on average, across all of $S$. 

\discount is unbiased as long as $\bar g(s) > 0$ for all $s \in S$ such that $f(s) > 0$. Henceforth, we assume detector counts are pre-processed if needed so that $g(s) > 0$ for all relevant units, for example, by adding a small amount to each count.

\subsection{$k$-\discount}\label{sec:kdiscount}

We now return to the multiple region counting problem. A naive approach would be to run \discount separately for each region.
However, this is suboptimal.
First, it allocates samples equally to each region, regardless of their size or predicted count.
Intuitively, we want to allocate more effort to regions with higher predicted counts.
Second, if regions overlap it is wasteful to repeatedly draw samples from each one to solve the estimation problems separately.

\paragraph{$k$-\discount} 
We propose estimators based on $n$ samples drawn from all of $\Omega$ with probability proportional to $g$.
Then, we can estimate $F(S)$ for any region using only the samples from $S$. Specifically, the $k$-\discount estimator is 
\[
\hat F_{\kDIS}(S) = \begin{cases} 
G(S) \cdot \bar w(S) & n(S) > 0 \\[5pt]
0 & n(S) = 0
\end{cases}
, \qquad s_i \sim \bar g_{\Omega},
\]
where $n(S) = |\{i: s_i \in S\}|$ is the number of samples in region $S$ and $\bar{w}(S) = \frac{1}{n(S)} \sum_{i: s_i \in S} w_i$ is the average importance weight for region $S$. 

\newtheorem{claim}{Claim}
\begin{claim}
\label{claim:conditional-unbiasedness}
The $k$-\discount estimator $\hat F_{\kDIS}(S)$ is conditionally unbiased given at least one sample in region $S$. That is, $\E[\hat F_{\kDIS}(S) \mid n(S) > 0] = F(S)$.
\end{claim}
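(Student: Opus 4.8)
The plan is to exploit the i.i.d.\ structure of the samples $s_1,\dots,s_n \sim \bar g_{\Omega}$ and reduce the claim to a per-sample computation via the tower property. First I would condition on the value of $n(S)$: writing $\E[\hat F_{\kDIS}(S) \mid n(S) > 0] = \sum_{m \geq 1} \E[\hat F_{\kDIS}(S) \mid n(S) = m]\, P(n(S)=m \mid n(S)>0)$, it suffices to show that $\E[\hat F_{\kDIS}(S) \mid n(S) = m] = F(S)$ for every fixed $m \geq 1$, since then the weighted average over $m$ collapses to $F(S)$ regardless of the conditional distribution of $n(S)$.

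The heart of the argument is a single-sample identity. A sample $s_i \sim \bar g_{\Omega}$ satisfies $P(s_i \in S) = G(S)/G(\Omega)$, and conditioned on the event $\{s_i \in S\}$ its law is exactly the restricted, renormalized proposal $\bar g_S(s) = g(s)\I[s\in S]/G(S)$. Hence the conditional mean of a single weight is
\[
\E[w_i \mid s_i \in S] = \sum_{s \in S} \frac{g(s)}{G(S)} \cdot \frac{f(s)}{g(s)} = \frac{1}{G(S)} \sum_{s\in S} f(s) = \frac{F(S)}{G(S)}.
\]
Next I would argue that, conditioned on $n(S)=m$ (and on any particular subset of $m$ indices falling in $S$), the $m$ in-region samples are i.i.d.\ draws from $\bar g_S$. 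Combining this with the single-sample identity and linearity gives $\E[\bar w(S) \mid n(S) = m] = \frac{1}{m}\cdot m\cdot \frac{F(S)}{G(S)} = \frac{F(S)}{G(S)}$, so that $\E[\hat F_{\kDIS}(S)\mid n(S)=m] = G(S)\cdot \frac{F(S)}{G(S)} = F(S)$ uniformly in $m$, which is exactly what the first paragraph reduced the claim to.

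I expect the main obstacle to be the conditional-independence step rather than any algebra: one must verify carefully that conditioning on the integer count $n(S)=m$ (an event that couples all coordinates) does not distort the conditional law of the in-region samples away from the product measure $\bar g_S^{\otimes m}$. The clean way to handle this is to condition on the finer event $\{I = \{i : s_i \in S\}\}$ for a fixed index set $I$ with $|I|=m$; since the samples are independent and the membership event $\{s_i \in S\}$ depends only on coordinate $i$, the conditional joint law factorizes, with each $s_i$ for $i \in I$ distributed as $\bar g_S$ and independent of the out-of-region coordinates. The per-$I$ conditional expectation is then $F(S)$ for every such $I$, and since $\{n(S)=m\}$ is the disjoint union of these atoms, averaging over the random identity of $I$ preserves the value. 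Finally, I would note that the $n(S)=0$ branch never contributes because we condition throughout on $n(S)>0$, which is precisely why the claim asserts only \emph{conditional} unbiasedness.
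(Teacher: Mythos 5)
Your proposal is correct, and it reaches the paper's conclusion by a slightly different technical route through the same bottleneck. Both you and the paper reduce the claim to showing $\E[\hat F_{\kDIS}(S)\mid n(S)=m]=F(S)$ for each $m\geq 1$, and both rest on the same single-sample identity $\E[w_i \mid s_i \in S] = F(S)/G(S)$. The difference lies in how the coupling between the count $n(S)$ and the sample locations is dissolved. The paper works index by index: it writes $\hat F_{\kDIS}(S) = G(S)\cdot\frac{1}{m}\sum_{i=1}^n w_i\,\I[s_i\in S]$, splits each term via the identity $\E\big[h(X)\I[X\in A]\big]=\Pr[X\in A]\cdot\E[h(X)\mid X\in A]$ (its Lemma~\ref{lemma:lemma1}, applied under the conditioning), argues $s_i$ is conditionally independent of $n(S)$ given $s_i\in S$ by decomposing $n(S)=\I[s_i\in S]+\sum_{j\neq i}\I[s_j\in S]$, and then needs the exchangeability computation $\Pr[s_i\in S\mid n(S)=m]=m/n$ so that the $n$ terms and the $1/m$ normalization cancel correctly. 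You instead condition on the finer atom $\{I=\{i: s_i\in S\}\}$: since the coordinates are independent and each membership event is coordinate-wise, the conditional law factorizes with the in-region coordinates i.i.d.\ $\bar g_S$, the per-atom expectation is $F(S)$ uniformly in $I$, and averaging over the atoms partitioning $\{n(S)=m\}$ finishes the argument. Your route buys full transparency on exactly the point you flagged as the main obstacle --- it makes visible why conditioning on the count does not distort the in-region law (this is the standard fact that an i.i.d.\ sample conditioned on how many points fall in $S$ has those points i.i.d.\ from the restricted law, the same structure the paper later exploits for post-stratification and in the proof of Claim~\ref{claim:asymptotic-distribution}) --- and it eliminates the $m/n$ bookkeeping entirely, since the $1/m$ normalization matches the number of in-region terms by construction. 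The paper's route is more compact, packaging the symmetry argument into a small reusable lemma at the cost of two slightly delicate conditional-probability steps. One housekeeping point worth making explicit in either version: the conditioning events have positive probability because the paper assumes $g(s)>0$ everywhere after pre-processing, so $p(S)=G(S)/G(\Omega)>0$.
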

The unconditional bias can also be analyzed (see Appendix). Overall, bias has negligible practical impact. 
It occurs only when the sample size $n(S)$ is zero, which is an event that is both observable and has probability $(1-p(S))^n$ that decays exponentially in $n$, where $p(S)=G(S)/G(\Omega)$.

In terms of variance, $k$-\discount behaves similarly to \discount run on each region $S$ with sample size equal to $\E[n(S)] = n p(S)$. 
To first order, both approaches have variance $\frac{G(S)^2 \cdot \sigma^2(S)}{n p(S)}$ where $\sigma^2(S)$ is the importance-weight variance.
In the case of \emph{disjoint} regions, running \discount on each region is the same as \emph{stratified importance sampling} across the regions, and the allocation of $n p(S)$ samples to region $S$ is optimal in the following sense:
\begin{claim}
\label{claim:optimal-sample-allocation}
Suppose $S_1, \ldots, S_k$ partition $\Omega$ and the importance weight variance $\sigma^2(S_i) = \sigma^2$ is constant across regions. Assume \discount is run on each region $S_i$ with $n_i$ samples. Given a total budget of $n$ samples, the sample sizes that minimize $\sum_{i=1}^k \Var(\hat F_{\DIS}(S_i))$ are given by $n_i  = n p(S_i) = n G(S_i)/G(\Omega)$.
\end{claim}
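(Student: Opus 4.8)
The plan is to reduce Claim~\ref{claim:optimal-sample-allocation} to a textbook constrained minimization, once the right variance formula is in hand. First I would compute $\Var(\hat F_{\DIS}(S_i))$ exactly. Running \discount on $S_i$ with $n_i$ samples gives $\hat F_{\DIS}(S_i) = G(S_i)\cdot\frac{1}{n_i}\sum_{j=1}^{n_i} w_j$ with $w_j = f(s_j)/g(s_j)$ and $s_j \sim \bar g_{S_i}$ drawn i.i.d. Since the $w_j$ are i.i.d. with variance $\sigma^2(S_i)$ and the prefactor $G(S_i)$ is deterministic, the average has variance $\sigma^2(S_i)/n_i$, so that
\[
\Var(\hat F_{\DIS}(S_i)) = \frac{G(S_i)^2\,\sigma^2(S_i)}{n_i} = \frac{G(S_i)^2\,\sigma^2}{n_i},
\]
where the second equality invokes the constant-variance hypothesis $\sigma^2(S_i)=\sigma^2$. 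Note this is exact rather than merely first order, because here $n_i$ is fixed rather than random. Summing over regions, the objective to minimize is $\sum_{i=1}^k G(S_i)^2\sigma^2/n_i$ subject to $\sum_{i=1}^k n_i = n$.

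Next I would solve this program, relaxing the $n_i$ to positive reals. Each summand is convex in $n_i$ on $(0,\infty)$ and the budget constraint is affine, so the problem is convex and any stationary point of the Lagrangian is the global minimizer. Forming $L = \sum_i G(S_i)^2\sigma^2/n_i + \lambda(\sum_i n_i - n)$ and setting $\partial L/\partial n_i = 0$ gives $\lambda = G(S_i)^2\sigma^2/n_i^2$, hence $n_i \propto G(S_i)$; crucially the common factor $\sigma$ cancels, which is exactly where the constant-variance assumption is used. Imposing the budget and using that $S_1,\dots,S_k$ partition $\Omega$, so that $\sum_i G(S_i) = G(\Omega)$, yields the normalization $n_i = n\,G(S_i)/G(\Omega) = n\,p(S_i)$, as claimed.

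I do not expect a serious obstacle; the argument is routine. The two points needing care are (i) the exact variance derivation, where one confirms that the deterministic $G(S_i)$ factors out as a square and that $\sigma^2(S_i)$ is indeed the single-sample importance-weight variance under $\bar g_{S_i}$, and (ii) the certification that the continuous relaxation together with convexity yields a global minimum rather than a mere critical point. For (ii) a clean alternative avoids calculus entirely via Cauchy--Schwarz: $(\sum_i G(S_i))^2 \le (\sum_i G(S_i)^2/n_i)(\sum_i n_i)$, so $\sum_i G(S_i)^2/n_i \ge G(\Omega)^2/n$ with equality if and only if $n_i \propto G(S_i)$, which recovers the same allocation. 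Finally, it is worth remarking that dropping the constant-variance hypothesis would replace the proportionality by the Neyman allocation $n_i \propto G(S_i)\sigma(S_i)$; the stated clean form $n_i = n\,p(S_i)$ is precisely its specialization when $\sigma^2(S_i)$ is constant.
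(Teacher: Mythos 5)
Your proof is correct, and your primary route differs from the paper's. The paper skips the Lagrangian entirely: after writing the same exact variance $\Var(\hat F_{\DIS}(S_i)) = G(S_i)^2\sigma^2/n_i$, it applies the Cauchy--Schwarz inequality directly (stated for $k=2$, with the remark that it generalizes), observing that $\bigl(\sum_i G(S_i)^2/n_i\bigr)\bigl(\sum_i n_i\bigr) \geq \bigl(\sum_i G(S_i)\bigr)^2$ with equality at $n_i \propto G(S_i)$, then normalizing by the budget --- which is precisely the ``clean alternative'' you relegate to a closing remark, except that you state it for general $k$ rather than $k=2$. Your main argument via the Lagrangian plus convexity is equally valid: each term $G(S_i)^2\sigma^2/n_i$ is convex on $(0,\infty)$ and the constraint is affine, so the stationary point is the global minimizer, and you correctly identify where the constant-variance hypothesis enters (the $\sigma$ cancels from the proportionality). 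The Lagrangian route buys mechanical generality --- as you note, it immediately yields the Neyman allocation $n_i \propto G(S_i)\sigma(S_i)$ when $\sigma^2(S_i)$ varies, which the paper only gestures at by citing Neyman allocation --- while the Cauchy--Schwarz route buys a calculus-free certificate of global optimality in one line, which is exactly the care point (ii) you flag. Your attention to the continuous relaxation of the integer $n_i$ is slightly more careful than the paper, which silently treats the $n_i$ as real.
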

The analysis uses reasoning similar to the \emph{Neyman allocation} for stratified sampling~\cite{cochran1977sampling}, and shows that $k$-\discount approximates the optimal allocation of samples to (disjoint) regions under the stated assumptions.
One key difference is that $k$-\discount draws samples from all of $\Omega$ and then assigns them to regions, which is called ``post-stratification'' in the sampling literature~\cite{cochran1977sampling}.
An exact variance analysis in the Appendix reveals that, if the expected sample size $np(S)$ for a region is very small, $k$-\discount may have up to 30\% ``excess'' variance compared to stratification due to the random sample size, but the excess variance disappears quickly and both approaches have the same asymptotic variance.
A second key difference to stratification is that regions can overlap; $k$-\discount's approach of sampling from all of $\Omega$ and then assigning samples to regions extends cleanly to this setting.

\subsection{Control Variates}\label{sec:cv_ci}

Control variates are functions $h(s)$ whose integrals $H(S)= \sum_{s\in S} h(s)$ are known and can be combined with importance sampling using the following estimator: 
\[
\hat F_{\kDIS cv}(S) = \begin{cases} 
G(S) \cdot \bar w_{h}(S) + H(S) & n(S) > 0 \\[5pt]
0 & n(S) = 0
\end{cases}
, \qquad s_i \sim \bar g_{\Omega},
\]
where $\bar{w}_{h}(S) = \frac{1}{n(S)} \sum_{i: s_i \in S} w_{h,i}$ and $w_{h,i} = (f(s_i) - h(s_i))/g(s_i)$. It is clear that $\hat{F}_{\kDIS cv}(S)$ has the same expectation as $\hat{F}_{\kDIS(S)}$, but $\hat{F}_{\kDIS cv}(S)$ might have a lower variance under certain conditions (if $f$ and $h$ are sufficiently correlated~\cite{mcbook}). 
For bird counting, estimated counts from previous years could be used as control variates as migration is periodic to improve count estimates (see experiments in \S~\ref{sec:exp} for details). 

\subsection{Confidence intervals}\label{sec:confidenceintervals}
Confidence intervals for $k$-\discount can be constructed in a way similar to standard importance sampling. For a region $S$, first estimate the importance weight variance $\sigma^2(S)$ as:
\[
\hat \sigma^2(S) = \frac{1}{n(S)}\sum_{i: s_i \in S}\left(\frac{f(s_i)}{g(s_i)}-\frac{\hat F_{\kDIS}(S)}{G(S)}\right)^2.
\]
An approximate $1-\alpha$ confidence interval is then given by $\hat F_{\kDIS}(S) \pm z_{\alpha/2} \cdot G(S)\cdot\hat \sigma(S)/\sqrt{n(S)}$, where $z_\gamma$ is the $1-\gamma$ quantile of the standard normal distribution, e.g., $z_{0.025} = 1.96$ for a 95\% confidence interval. 
The theoretical justification is subtle due to scaling by the \emph{random} sample size $n(S)$. It is based on the following asymptotic result, proved in the Appendix.
\begin{claim}
\label{claim:asymptotic-distribution}
The $k$-\discount estimator with scaling factor $G(S) \hat \sigma(S)/\sqrt{n(S)}$ is asymptotically normal, that is, the distribution of
$\frac{\hat F_{\kDIS}(S) - F(S)}{G(S) \cdot \hat \sigma(S)/\sqrt{n(S)}}$
converges to $\mathcal N(0, 1)$ as $n \to \infty$.
\end{claim}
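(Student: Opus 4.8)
The plan is to reduce the statement to a studentized central limit theorem for a sum of i.i.d.\ terms whose number of summands $n(S)$ is itself random, and then to invoke Anscombe's random-index CLT to handle the random sample size.

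First I would record the key distributional fact: the samples landing in $S$ are conditionally i.i.d.\ draws from the restricted proposal $\bar g_S$. Indeed, conditioning a single draw $s \sim \bar g_{\Omega}$ on the event $s \in S$ yields density $\bar g_{\Omega}(s)/p(S) = g(s)/G(S) = \bar g_S(s)$ for $s \in S$, where $p(S) = G(S)/G(\Omega)$. Hence, given $n(S) = m$, the weights $w_i = f(s_i)/g(s_i)$ for $i$ with $s_i \in S$ are $m$ i.i.d.\ copies of a variable $W$ with mean $\mu_S := \E_{s \sim \bar g_S}[f(s)/g(s)] = F(S)/G(S)$ and variance $\sigma^2(S) = \Var_{s \sim \bar g_S}(f(s)/g(s))$. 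Because $\Omega$ is finite and $g(s) > 0$ everywhere, $W$ is bounded and all its moments are finite, so no moment conditions need to be checked by hand. Using $\hat F_{\kDIS}(S) = G(S)\,\bar w(S)$ and $G(S)\mu_S = F(S)$, the quantity in the claim simplifies to the self-normalized sum $\sqrt{n(S)}\,(\bar w(S) - \mu_S)/\hat\sigma(S)$, since the factors of $G(S)$ cancel.

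Next I would treat the random index. As $n \to \infty$, $n(S) \sim \mathrm{Binomial}(n, p(S))$ with $p(S) > 0$, so $n(S) \to \infty$ almost surely and $n(S)/(n\,p(S)) \to 1$ in probability (the variance of $n(S)$ is $O(n)$ while its mean is $\Theta(n)$). For partial sums of i.i.d.\ finite-variance random variables, Anscombe's uniform-continuity-in-probability condition holds automatically, so Anscombe's theorem applies to the randomly indexed, centered and scaled sum, giving $\sqrt{n(S)}\,(\bar w(S) - \mu_S)/\sigma(S) \to \mathcal{N}(0,1)$. Equivalently, one may argue by conditioning on $n(S) = m$, using the ordinary CLT for each fixed $m$ together with the concentration of $n(S)$ near $n\,p(S) \to \infty$, and then taking expectations; Anscombe packages this cleanly. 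Finally I would replace $\sigma(S)$ by $\hat\sigma(S)$ via Slutsky's theorem: the statistic $\hat\sigma^2(S) = \frac{1}{n(S)}\sum_{i: s_i \in S}(w_i - \bar w(S))^2$ is exactly the sample variance of the i.i.d.\ weights in $S$, which converges in probability to $\sigma^2(S)$ by the law of large numbers under the random index (again justified by $n(S) \to \infty$ and an Anscombe/subsequence argument), so that $\sigma(S)/\hat\sigma(S) \to 1$. Multiplying the two convergences yields the claim.

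The main obstacle is the randomness of $n(S)$: it rules out a direct appeal to the classical CLT and to the standard consistency statement for the sample variance, both of which assume a deterministic number of observations. Anscombe's theorem (or, more pedestrianly, conditioning on $n(S)$ and dominating with the binomial concentration) is what makes the random sample size harmless; everything else is a routine studentized-CLT argument, simplified by the finiteness of $\Omega$, which guarantees the requisite finite moments for free.
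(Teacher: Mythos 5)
Your proposal is correct and follows essentially the same route as the paper's proof: both represent the weights of samples landing in $S$ as an i.i.d.\ sequence from $\bar g_S$ with a binomial random index $n(S)$, invoke a random-index CLT (you cite Anscombe's theorem; the paper cites Theorem 3.5.1 of van der Vaart and Wellner with $N_n = n(S)$ and $c_n = n\,p(S)$, which is the same tool), and finish by Slutsky after showing $\hat\sigma^2(S) \toP \sigma^2(S)$ under the random index. Your added remarks---that finiteness of $\Omega$ with $g > 0$ gives bounded weights, and that Anscombe's uniform-continuity condition holds automatically for i.i.d.\ partial sums---are correct elaborations of steps the paper leaves implicit.
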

In preliminary experiments we observed that for small expected sample sizes the importance weight variance $\sigma^2(S)$ can be underestimated leading to intervals that are too small --- as an alternative, we propose a practical heuristic for smaller sample sizes where $\hat \sigma^2(\Omega)$ is used instead of $\hat\sigma^2(S)$; that is, \emph{all} samples are used to estimate variability of importance weights for each region $S$.

\section{Experimental Setup}\label{sec:exp}

In this section we describe the counting tasks and detection models (\S~\ref{sec:exp:roosts}--\ref{sec:exp:buildings}) and the evaluation metrics (\S~\ref{sec:evalmetrics}) we will use to evaluate different counting methods. 
We focus on two applications: counting roosting birds in weather radar images and counting damaged buildings in satellite images of a region struck by a natural disaster. 

\subsection{Counting Roosting Birds from Weather Radar} \label{sec:exp:roosts}
Many species of birds and bats congregate in large numbers at nighttime or daytime roosting locations.
Their departures from these ``roosts'' are often visible in weather radar, from which it's possible to estimate their numbers~\cite{winkler2006roosts,buler2012mapping,horn2008analyzing}. 
The US ``NEXRAD'' weather radar network~\cite{nexrad} has collected data for 30 years from 143+ stations and provides an unprecedented opportunity to study long-term and wide-scale biological phenomenon such as roosts~\cite{rosenbergDeclineNorthAmerican2019,sanchez2019worldwide}.
However, the sheer volume of radar scans ($>$250M) prevents manual analysis and motivates computer vision approaches~\cite{chilson2018automated,lin2019mistnet,cheng2020,perez2022}.
Unfortunately, the best computer vision models~\cite{perez2022,cheng2020} for detecting roosts have average precision only around 50\% and are not accurate enough for fully automated scientific analysis, despite using state-of-the-art methods such as Faster R-CNNs~\cite{fasterrcnn} and training on thousands of human annotations --- the complexity of the task suggests substantial labeling and model development efforts would be needed to improve accuracy, and may be impractical.

\begin{wrapfigure}{r}{0.5\textwidth}
  \begin{center}
    \includegraphics[width=0.5\textwidth]{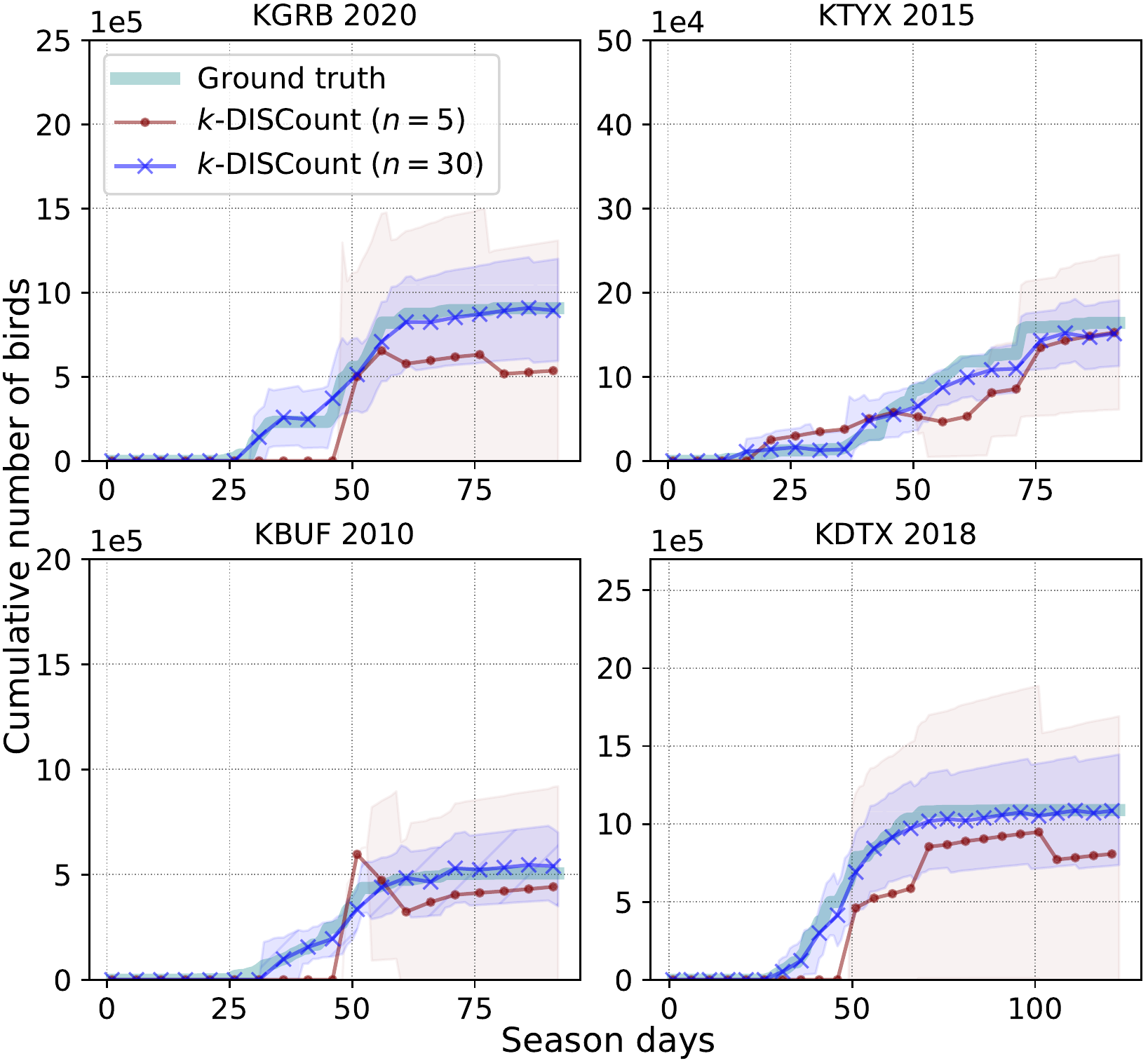}
  \end{center}
  \caption{Count estimates with confidence intervals for two station years (i.e., KGRB 2020 and KBUF 2010) using different numbers of samples.}
  \label{fig:radar_estimates}
\end{wrapfigure}

Previous work~\cite{maria,hermione} used a roost detector combined with manual screening of the detections to analyze more than 600,000 radar scans spanning a dozen stations in the Great Lakes region of the US to reveal patterns of bird migration over two decades. 
The vetting of nearly 64,000 detections was orders of magnitude faster than manual labeling, yet still required a substantial 184 hours of manual effort.
Scaling to the entire US network would require at least an order of magnitude more effort, thus motivating a statistical approach.

We use the exhaustively screened detections from the Great Lakes analysis in~\cite{maria,hermione} to systematically analyze the efficiency of sampling based counting. The data is organized into domains $\Omega^{\texttt{sta,yr}}$ corresponding to 12 stations and 20 years (see Fig.~\ref{fig:radar} in Appendix~\ref{app:countingtasks}). 
Thus the domains are disjoint and treated separately.
Counts are collected for each day $s$ by running the detector using all radar scans for that day to detect and track roost signatures and then mapping detections to bird counts using the measured radar ``reflectivity'' within the tracks.
For the approximate count $g(s)$ we use the automatically detected tracks, while for the true count $f(s)$ we use the manually screened and corrected tracks.
For a single domain, i.e., each station-year, we divide a complete roosting season into temporal regions in three different scenarios: (1) estimating bird counts up to each day in the roosting season (i.e., regions are nested prefixes of days in the entire season), (2) the end of each quarter of (i.e., regions are nested prefixes of quarters in the entire season), and (3) estimating each quarter's count (each region is one quarter). We measure error using the fully-screened data and average errors across all domains and regions.  Fig.~\ref{fig:radar_estimates} shows the counts and confidence intervals estimated using $k$-\discount for the first scenario on four station-years.

\subsection{Counting Damaged Buildings from Satellite Images} \label{sec:exp:buildings}

Building damage assessment from satellite images~\cite{su2018,deng2022} is often used to plan humanitarian response after a natural disaster strikes. However, the performance of computer vision models degrades when applied to new regions and disaster types. Our approach can be used to quickly vet the data produced by the detector to correctly estimate counts in these scenarios.

We use the building damage detection model by~\cite{xview1st}, the winner of the xView2 challenge~\cite{xview2}. The model is based on U-Net~\cite{unet} to detect buildings in the pre-disaster image, followed by a ``siamese network" that incorporates at pre- and post-disaster images to estimate damage.
The model is trained on the xBD dataset~\cite{xbddataset} that contains building and damage annotations spanning multiple geographical regions and disaster types (e.g., earthquake, hurricane, tsunami, etc.). 
While the dataset contains four levels of damage (i.e., 0: no-damage, 1: minor-damage, 2: major-damage, and 3: destroyed), in this work we combine all damage levels (i.e., classes 1-3) into a single ``damage'' class. 

We consider the Palu Tsunami from 2018; the data consists of 113 high-resolution satellite images labeled with 31,394 buildings and their damage levels.
We run the model on each tile $s$ to estimate the number of damaged buildings $g(s)$, while the ground-truth number of damaged buildings is used as $f(s)$. 
Our goal is to estimate the cumulative damaged building count in sub-regions expanding from the area with the most damaged buildings as shown in Fig.~\ref{fig:buildings_regions} in the Appendix~\ref{app:regions}.
To define the sub-regions, we sort all $m$ images by their distance from the epicenter (defined as the image tile with the most damaged buildings) and then divide into chunks or ``annuli'' $A_1, \ldots, A_7$ of size $m/7$.
The task is to estimate the cumulative counts $S_j = \bigcup_{i=1}^j A_i$ of the first $j$ chunks for $j$ from 1 to 7.

\subsection{Evaluation}\label{sec:evalmetrics}
We measure the fractional error between the \emph{true} and the estimated counts averaged over all regions in a domain $S_{1}, \ldots, S_{k} \subseteq \Omega$ as: 
\[
\texttt{Error} (\Omega) = \frac{1}{k } \sum_{i=1}^{k} \frac{|F(S_{i}) - \hat F(S_{i})|}{F(\Omega)}.
\]
For the bird counting task, for any given definition of regions within one station-year $\Omega$ (i.e., cumulative days or quarters defined in \S~\ref{sec:exp:roosts}) we report the error averaged across all station-years 
corresponding to 12 stations and $\approx 20$ years. For the damaged building counting problem there is only a single domain corresponding to the Palu Tsunami region.
In addition, we calculate the average confidence interval width normalized by $F(\Omega)$.
We run 1000 trials and plot average metrics $\pm 1.96 \times \text{std. error}$ over the trials.
We also evaluate confidence interval coverage, which is the fraction of confidence intervals that contain the true count over all domains, regions, and trials.

\begin{figure}[t]
    \centering
    \includegraphics[width=1.0\linewidth]{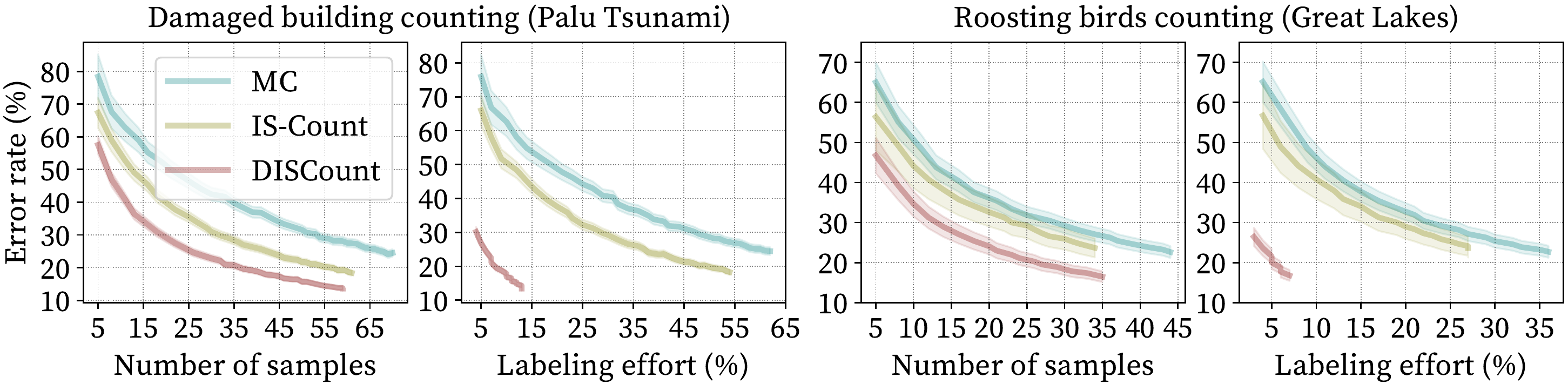}
    \caption{\textbf{Detector-based sampling.} Estimation error of damaged building counts in the Palu Tsunami region from the xBD dataset (left) and counting roosting birds from the Great Lakes radar stations in the US from NEXRAD data (right). We get lower error with \discount compared to IS-Count and simple Monte Carlo sampling (MC).  
    The labeling effort is further reduced with \discount since the user is not required to label an image from scratch but only to verify outputs from the detector (See \S~\ref{sec:res:labelingeffort} for details). The estimation errors are averaged over 1000 runs. 
    }
    \label{fig:is_vs_dis} 
\end{figure}

\section{Results}\label{sec:results}

In this section, we present the results comparing detector-based to covariate-based sampling. 
Also, we show reductions in labeling effort 
and demonstrate the advantages of estimating multiple counts jointly. 
Finally, we show confidence intervals and control variates results.

\paragraph{Detector-based sampling reduces error}\label{sec:res:discount}
We first compare \discount (detector-based sampling) to IS-Count and simple Monte Carlo sampling for estimating $F(\Omega)$, that is, the total counts of birds in a complete roosting season for a given station year, or damaged buildings in the entire disaster region.  
Fig.~\ref{fig:is_vs_dis} shows the error rate as a function of number of labeled samples (i.e., the number of \emph{distinct} $s_i$ sampled, since each $s$ is labeled at most once).
In the buildings application, a sample refers to an image tile of size $1024 \times 1024$ pixels, while for the birds a sample refers to a single day.

Using the detector directly without \emph{any} screening results in high error rates --- roughly 136\%  and 149\% for estimating the total count for the damaged buildings and bird counting tasks respectively. \citet{iscount} show the advantages of using importance sampling with screening to produce count estimates with base covariates as opposed to simple Monte Carlo sampling (MC vs. IS-Count). For the bird counting task, we construct a non-detector covariate $g_{\IS}$ by fitting a spline to $f(s)$ with 10\% of the days from an arbitrarily selected station-year pair (station KBUF in 2001). For the damaged building counting task, the covariate $g_{\IS}$ is the true count of all buildings (independent of the damage) obtained using the labels provided with the xBD dataset.

Covariate-based sampling (IS-Count) leads to significant savings over simple Monte Carlo sampling (MC), but \discount provides further improvements. In particular, to obtain an error rate of 20\% \discount requires $\approx 1.6\times$ fewer samples than IS-Count and $\approx 3\times$ fewer samples than MC for both counting problems. 

\begin{figure}[t]
    \centering
    \includegraphics[width=1.0\linewidth]{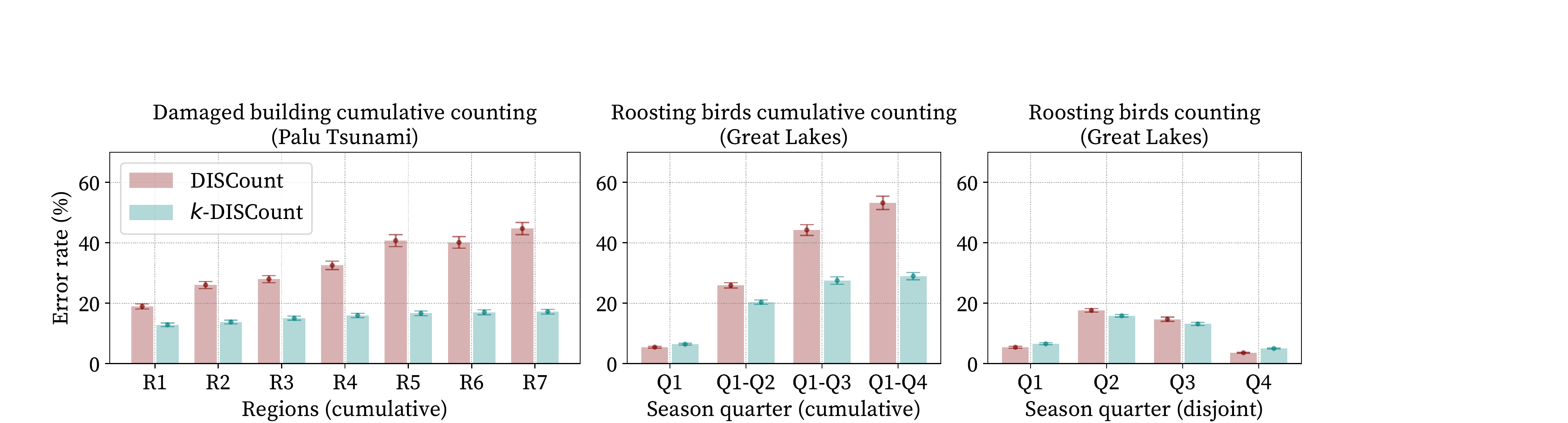}\\
    \caption{\textbf{Solving multiple counting problems jointly.} Estimation error of counting damaged buildings in the Palu Tsunami region from the xBD dataset (left) and counting roosting birds from the Great Lakes radar stations in the US from NEXRAD data (right). 
    We compare solving the counting problems jointly ($k$-\discount) 
    against solving the counting problems separately (\discount). We use 10 samples for both these tests. The estimation errors are averaged over 1000 runs. 
    }
    \vspace{-5pt}
    \label{fig:nested} 
\end{figure}

\paragraph{Screening leads to a further reduction in labeling effort}\label{sec:res:labelingeffort} 
\discount alleviates the need for users to annotate an image from scratch, such as identifying an object and drawing a bounding box around it. Instead, users only need to verify the detector's output, which tends to be a quicker process.
In a study by \citet{su2012} on the ImageNet dataset~\cite{imagenet}, the median time to draw a bounding-box was found to be 25.5 seconds, whereas verification took only 9.0 seconds (this matches the screening time of $\approx$10s per bounding-box in~\cite{hermione,maria}). The right side of Fig.~\ref{fig:is_vs_dis} presents earlier plots with the x-axis scaled based on labeling effort, computed as $100\cdot c\cdot n/|\Omega|$, where $n$ denotes the number of screened samples and $c \in [0,1]$ represents the fraction of time relative to labeling from scratch. For instance, the labeling effort is 100\% when all elements must be labeled from scratch ($c=1$ and $n=|\Omega|$).
For \discount, we estimate $c_{\DIS}=9.0/(25.5 + 9.0)=0.26$, since annotating from scratch requires both drawing and verification, while screening requires only verification. To achieve the same 20\% error rate, \discount requires $6\times$ less effort than IS-Count and $9\times$ less effort than MC for the bird counting task, and $8\times$ less effort than IS-Count and $12\times$ less effort than MC for building counting. 

\paragraph{Multiple counts can be estimated efficiently ($k$-\discount)}\label{sec:res:kdiscount}
To solve multiple counting problems, we compared $k$-\discount to using \discount separately on each region. 
For bird counting, the task was to estimate four quarterly counts (cumulative or individual) as described in \S~\ref{sec:exp:roosts}.
For $k$-\discount, we sampled $n=40$ days from the complete season to estimate the counts simultaneously.
For $\discount$, we solved each of the four problems separately using $n/4 = 10$ samples per region for the same total number of samples.
For building damage counting, the task was to estimate seven cumulative counts as described in \S~\ref{sec:exp:buildings}.
For $k$-\discount, we used $n=70$ images sampled from the entire domain, while for \discount we used $n/7 = 10$ sampled images per region. 

Fig.~\ref{fig:nested} shows that solving multiple counting problems jointly ($k$-\discount) is better than solving them separately (\discount).
For the cumulative tasks, $k$-\discount makes much more effective use of samples from overlapping regions.
For single-quarter bird counts, $k$-\discount has slightly higher error in Q1 and Q4 and lower errors in Q2 and Q3.
This can be understood in terms of sample allocation: $k$-\discount allocates in proportion to predicted counts, which provides more samples and better accuracy in Q2-Q3, when many more roosts appear, and approximates the optimal allocation of Claim~\ref{claim:optimal-sample-allocation}. \discount allocates samples equally, so has slightly lower error for the smaller Q1 and Q4 counts. 
In contrast, for building counting, $k$-\discount has lower error even for the smallest region R1, since this has the most damaged buildings and thus gets more samples than \discount.
Fig.~\ref{fig:ci_cv} (left) shows $k$-\discount outperforms simple Monte Carlo (adapted to multiple regions similarly to $k$-\discount) for estimating cumulative daily bird counts as in  Fig.~\ref{fig:radar_estimates}.

\begin{figure}[t]
    \centering
    \includegraphics[width=1.0\linewidth]{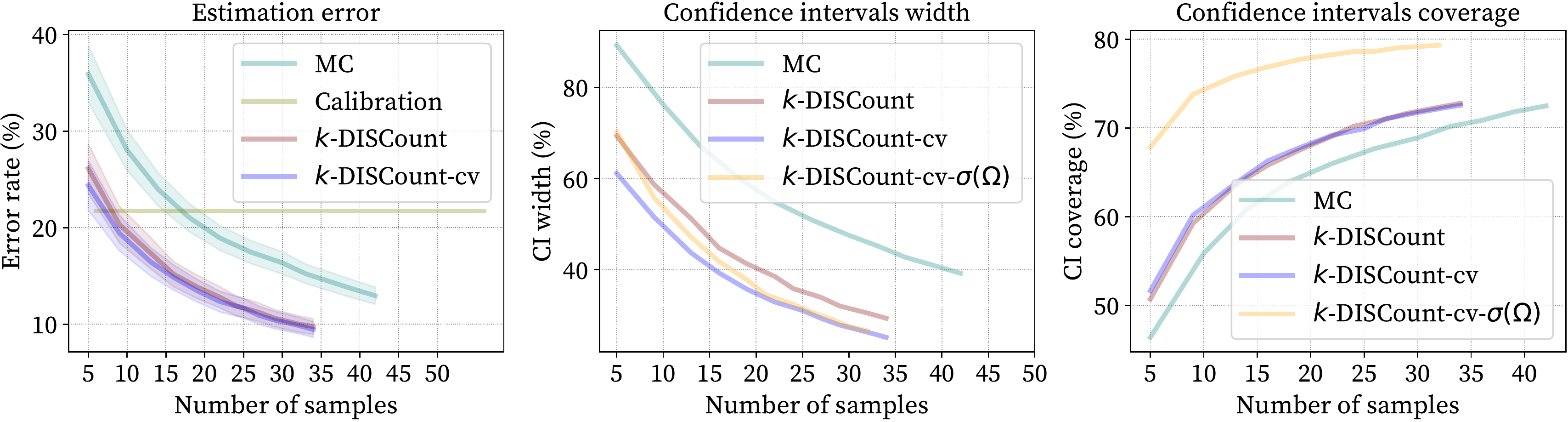}
    \vspace{-10pt}
    \caption{\textbf{Control variates and confidence intervals on bird counting.}  
    We compare simple Monte Carlo (MC), calibration with isotonic regression, and variations of $k$-\discount that include control variates (-cv) and improved variance estimates $\left(-\sigma(\Omega)\right)$.
    (left) Error rates using $k$-\discount are significantly smaller than MC and calibration. (middle) Confidence intervals' width. (right) Confidence intervals' coverage. The error and the confidence intervals' width are slightly reduced when control variates are used while maintaining the coverage. Furthermore, $k$-\discount-cv-$\sigma(\Omega)$ improves the coverage. The results are averaged over all station-years and over 1,000 runs. 
    }
    \label{fig:ci_cv} 
\end{figure}

\paragraph{Confidence intervals}\label{sec:res:confintervals}
We measure the width and coverage of the estimated confidence intervals (CIs) per number of samples for cumulative daily bird counting; see examples in Fig.~\ref{fig:radar_estimates}. 
We compare the CIs of $k$-\discount, $k$-\discount-cv (control variates), $k$-\discount-cv-$\sigma(\Omega)$ (using all samples to estimate variance), and simple Monte Carlo sampling in Fig.~\ref{fig:ci_cv}. When using control variates, the error rate and the CI width are slightly reduced while keeping the same coverage. 
CI coverage is lower than the nominal coverage (95\%) for all methods, but increasing with sample size and substantially improved by $k$-\discount-cv-$\sigma(\Omega)$, which achieves up to $\approx 80\%$ coverage. 
Importance weight distributions can be heavily right-skewed and the variance easily underestimated~\cite{hesterberg1996estimates}.

\paragraph{\discount improves over a calibration baseline}\label{sec:res:calibration}
We implement a calibration baseline where the counts are estimated as  
 $\hat{F}_{\CAL}(S) = \sum_{s \in S} \hat \phi(g(s)),$
where we learn an isotonic regression model $\hat \phi$ between the predicted and true counts trained for each station using 15 uniformly selected samples from one year from that station. Results are shown as the straight line in Fig.~\ref{fig:ci_cv} (left). \discount outperforms calibration with less than 10 samples per station suggesting the difficulties in generalization across years using a simple calibration approach.

\paragraph{Control variates ($k$-\discount-cv)} \label{sec:res:controlvariates}
We perform experiments adding control variates to $k$-\discount in the roosting birds counting problem. 
We use the calibrated detector counts $\hat{\phi}(g(s))$ defined above as the control variate for each station year. 
Fig.~\ref{fig:ci_cv} shows that control variates reduce the confidence interval width (middle: $k$-\discount vs. $k$-\discount-cv) without hurting coverage (right). In addition, the error of the estimate is reduced slightly, as shown in Fig.~\ref{fig:ci_cv} (left). Note that this is achieved with a marginal increase in the labeling effort.

\section{Discussion and Conclusion}\label{sec:discussion}

We contribute methods for counting in large image collections with a detection model.
When the task is complex and the detector is imperfect, allocating human effort to estimate the scientific result directly might be more efficient than improving the detector.
For instance, performance gains from adding more training data may be marginal for a mature model.
Our proposed solution produces accurate and unbiased estimates with a significant reduction in labeling costs from naive and covariate-based screening approaches.
We demonstrate this in two real-world open problems where data screening is still necessary despite large investments in model development. Our approach is limited by the availability of a good detector, and confidence interval coverage is slightly low; possible improvements are to use bootstrapping or corrections based on importance-sampling diagnostics~\cite{hesterberg1996estimates}.

\section{Acknowledgements}

We thank Wenlong Zhao for the deployment of the roost detector, Maria Belotti, Yuting Deng, and our Colorado State University AeroEco Lab collaborators for providing the screened data of the Great Lakes radar stations, and Yunfei Luo for facilitating the building detections on the Palu Tsunami region. This work was supported by the National Science Foundation award \#2017756.

\bibliographystyle{unsrtnat}
\bibliography{neurips_2020}

\clearpage
\appendix

\section{Derivations}

\subsection{IS-Count}

Take $p(s) = 1/|S|$ and $\tilde f(s) = |S| f(s)$, we want $\E_{p}[\tilde f(s)] = \sum_{s \in S} \frac{1}{|S|} |S| f(s) = F(S)$.  Importance sampling with proposal $q$ gives
$$
\begin{aligned}
F(S) 
&= \E_p[\tilde f(s)]  \\[5pt]
&= \E_q\left[\frac{p(s)}{q(s)} \tilde f(s) \right] \\[5pt]
&= \E_q\left[\frac{1/|S|}{q(s)} |S| f(s) \right] \\[5pt]
&= \E_q\left[\frac{f(s)}{q(s)} \right]
\end{aligned}
$$

\subsection{DISCount}
Take $q = \bar g_S$ in IS-Count, then 
$$
\begin{aligned}
F(S) &= \E_q\left[\frac{f(s)}{q(s)} \right] \\[5pt]
&= \E_{\bar g_S}\left[ \frac{f(s)}{g(s)/G(S)} \right] \\[5pt]
&= G(S) \cdot  \E_{\bar g_S}\left[ \frac{f(s)}{g(s)} \right]
\end{aligned}
$$

\subsection{$k$-DISCount}

\begin{proof}[Proof of Claim~\ref{claim:conditional-unbiasedness}]
For any $m > 0$ we have
$$
\begin{aligned}
\E\left[\hat F_{\kDIS}(S) \,\big\vert\, n(S) = m\right]
&= \E\left[ G(S) \cdot \frac{1}{m} \sum_{i=1}^n w_i\cdot \I[s_i \in S] \,\bigg\vert\, n(S) = m \right] \\
&= G(S) \cdot \frac{1}{m} \sum_{i=1}^n \E\left[ w_i \cdot \I[s_i \in S] \,\big\vert\, n(S) = m\right] \\
&= G(S) \cdot \frac{1}{m} \sum_{i=1}^n \Pr[s_i \in S \mid n(S) = m] \cdot \E\left[w_i \,\vert\, s_i \in S, n(S) = m \right] \\
&= G(S) \cdot \frac{1}{m} \sum_{i=1}^n \Pr[s_i \in S \mid n(S) = m] \cdot \E\left[w_i \,\vert\, s_i \in S \right] \\
&= G(S) \cdot \frac{1}{m} \sum_{i=1}^n \frac{m}{n} \cdot \E\left[w_i \,\big\vert\, s_i \in S \right] \\
&= G(S) \cdot \frac{1}{m} \sum_{i=1}^n \frac{m}{n} \cdot \frac{F(S)}{G(S)} \\
&= F(S)
\end{aligned}
$$
In the third line, we used the fact that $\E\big[h(X)\cdot \I[X \in A]\big] = \Pr[x \in A] \cdot \E[h(X) \mid X \in A]$ for any random variable $X$ and event $A$ (see Lemma~\ref{lemma:lemma1} below). In the fourth line we used the fact that $s_i$ is conditionally independent of $n(S)$ given $s_i \in S$, since $n(S) = \I[s_i \in S] + \sum_{j \neq i} \I[s_j \in S]$  and the latter sum is independent of $s_i$. In the fifth line we used the fact that $\Pr[s_i \in S \mid n(S) = m] = \frac{m}{n}$ because $n(S) = \sum_{j=1}^n \I[s_j \in S]$ and the terms in the sum are exchangeable. In the sixth line we computed the conditional expectation as follows using the fact that the conditional density of $s_i$ given $s_i \in S$ is equal to $g(s_i)/G(S)$:
$$
\begin{aligned}
\E\left[w_i \,\big\vert\, s_i \in S \right] 
= \E\left[\frac{f(s_i)}{g(s_i)} \,\bigg\vert\, s_i \in S\right]
= \sum_{s \in S} \frac{g(s_i)}{G(S)} \cdot \frac{f(s_i)}{g(s_i)}
= \frac{1}{G(S)} \sum_{s \in S} f(s) = \frac{F(S)}{G(S)}.
\end{aligned}
$$
\end{proof}

The unconditional bias of $k$-\discount can also be analyzed:
\begin{claim}
\label{claim:bias}
Let $p(S) = \Pr[s_i \in S] = G(S)/G(\Omega)$. The bias of the $k$-\discount estimator is $\E[\hat F_{\normalfont\kDIS}(S)] - F(S) =  - \big(1-p(S)\big)^n$.
\end{claim}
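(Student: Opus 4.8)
The plan is to reduce the statement to the conditional unbiasedness already established in Claim~\ref{claim:conditional-unbiasedness}, via the law of total expectation over the random sample size $n(S)$. First I would write
\[
\E[\hat F_{\kDIS}(S)] = \sum_{m=0}^{n} \Pr[n(S) = m] \cdot \E\big[\hat F_{\kDIS}(S) \mid n(S) = m\big],
\]
and peel off the $m=0$ term. By definition the estimator is $0$ on the event $\{n(S)=0\}$, so that term contributes nothing. For every $m>0$, Claim~\ref{claim:conditional-unbiasedness} gives $\E[\hat F_{\kDIS}(S)\mid n(S)=m] = F(S)$, a value independent of $m$, so the remaining sum collapses to $F(S)\cdot\Pr[n(S)>0]$.

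The second step is to pin down the distribution of $n(S)$. Since the $n$ samples are drawn i.i.d.\ from $\bar g_\Omega$ and each lands in $S$ independently with probability $p(S)=\Pr[s_i\in S]=G(S)/G(\Omega)$, the count $n(S)=\sum_{j=1}^n \I[s_j\in S]$ is $\text{Binomial}(n,p(S))$. Hence $\Pr[n(S)=0]=(1-p(S))^n$ and $\Pr[n(S)>0]=1-(1-p(S))^n$. Combining with the previous step yields
\[
\E[\hat F_{\kDIS}(S)] = F(S)\big(1-(1-p(S))^n\big),
\]
from which the bias follows by subtracting $F(S)$.

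I do not expect a genuine analytic obstacle here: the substantive work is carried entirely by Claim~\ref{claim:conditional-unbiasedness}, and what remains is the total-expectation bookkeeping together with the elementary binomial observation that an empty sample in $S$ has probability $(1-p(S))^n$. The one point I would want to flag is the prefactor: the derivation gives bias $=-F(S)(1-p(S))^n$, i.e.\ $-F(S)$ times the probability of drawing no sample in $S$, so I would expect the stated right-hand side to read $-F(S)\big(1-p(S)\big)^n$ rather than $-\big(1-p(S)\big)^n$. This also makes the ``negligible practical impact'' remark transparent, since the multiplicative term $(1-p(S))^n$ decays exponentially in $n$ and the whole expression scales with the magnitude $F(S)$ of the quantity being estimated.
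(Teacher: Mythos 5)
Your proof is correct and takes essentially the same route as the paper's: the paper likewise applies the law of total expectation over the events $\{n(S)=0\}$ and $\{n(S)>0\}$, invokes Claim~\ref{claim:conditional-unbiasedness} on the latter, and uses the fact that $n(S) \sim \text{Binomial}(n, p(S))$ so that $\Pr[n(S)=0] = (1-p(S))^n$, arriving at $\E[\hat F_{\kDIS}(S)] = \big(1-(1-p(S))^n\big) \cdot F(S)$. Your flag about the prefactor is also warranted: the paper's own derivation yields bias $-F(S)\,(1-p(S))^n$, so the claim as stated is missing the factor $F(S)$ --- evidently a typo in the statement, not in the proof, and one that does not affect the surrounding remarks about exponential decay of the bias.
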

In particular, bias decays exponentially with $n$ and quickly becomes negligible, with magnitude at most $\epsilon$ for $n \geq \log(1/r)/\log(1/\epsilon)$ and $r=1-p(S)$.
Further, the bias is easily computable from the detector counts and therefore known prior to sampling, and the event that leads to a biased estimate $(n(S)=0)$ is observed after sampling. 
All these factors make bias a very minor concern.\footnote{The $k$-\discount estimator can be debiased by dividing by $u = 1-(1-p(S))^n < 1$. However, this leads to higher overall error: if $n(S) = 0$, the estimator is unchanged, and conditioned on the event $n(S) > 0$ the estimator becomes biased and has higher variance by a factor of $1/u^2 > 1$.}

\begin{proof}[Proof of Claim~\ref{claim:bias}]
Using Claim~\ref{claim:conditional-unbiasedness}, we compute the unconditional expectation as
$$
\begin{aligned}
\E\left[\hat F_{\kDIS}(S)\right] 
&= \Pr[n(S) = 0] \cdot \E\left[\hat F_{\kDIS}(S) \mid n(S) = 0\right] + \Pr[n(S) > 0] \cdot \E[\hat F_{\kDIS}(S) \mid n(S) > 0 ] \\[5pt]
&= \Pr[n(S) > 0] \cdot F(S) \\[5pt]
&= \bigg(1 - \bigg(1- \frac{G(S)}{G(\Omega)}\bigg)^n\bigg) \cdot F(S).
\end{aligned}
$$
In the final line, $1 - G(S)/G(\Omega)$ is probability that $s_i \notin S$ for a single $i$, and $(1 - G(S)/G(\Omega))^n = \Pr[n(S) = 0]$ is the probability that $s_i \notin S$ for all $i$. Rearranging gives the result.
\end{proof}

\newtheorem{lemma}{Lemma}
\begin{lemma}
\label{lemma:lemma1}
$\E\big[h(X) \cdot \I[X \in A]\big] = \Pr[x \in A] \cdot \E[h(X) \,|\, X \in A]$ for any random variable $X$ and event $A$.
\end{lemma}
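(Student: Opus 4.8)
The plan is to unfold the definition of conditional expectation given an event. Recall that for an event $A$ with $\Pr[X \in A] > 0$, the conditional expectation is defined by
\[
\E[h(X) \mid X \in A] = \frac{\E\big[h(X) \cdot \I[X \in A]\big]}{\Pr[X \in A]}.
\]
Multiplying both sides by $\Pr[X \in A]$ yields the claim immediately. If one prefers not to take this quotient as the definition, I would instead derive it from the definition of conditional probability. In the discrete setting relevant to this paper (the domain $\Omega$ is finite), I would write the left-hand side as a sum over the support,
\[
\E\big[h(X) \cdot \I[X \in A]\big] = \sum_x h(x)\,\I[x \in A]\,\Pr[X = x] = \sum_{x \in A} h(x)\,\Pr[X = x],
\]
and expand the right-hand side using $\Pr[X = x \mid X \in A] = \I[x \in A]\,\Pr[X = x]/\Pr[X \in A]$; the factor $\Pr[X \in A]$ then cancels and the two expressions coincide term by term.

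There is essentially no obstacle here: the statement is a restatement of the definition of conditional expectation, and the only point requiring a word of care is the degenerate case $\Pr[X \in A] = 0$, where the right-hand side is not defined. Since the lemma is invoked only with $A = \{s_i \in S\}$ for a region $S$ with $G(S) > 0$, this event always has positive probability and the identity is well posed, so I would simply note this convention rather than treat the null case. The identity holds verbatim for general (non-discrete) $X$ as well, replacing sums by integrals against the law of $X$, but the discrete version suffices for every use made of it in the proof of Claim~\ref{claim:conditional-unbiasedness}.
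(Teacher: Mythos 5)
Your proof is correct and takes essentially the same route as the paper: the paper likewise expands $\E\big[h(X)\cdot \I[X \in A]\big]$ as a sum over the discrete support, rewrites $\Pr[X=x]\,h(x)\,\I[x \in A]$ as $\Pr[X=x,\, X \in A]\,h(x)$, and factors out $\Pr[X \in A]$ using the definition of conditional probability. Your added remark about the degenerate case $\Pr[X \in A] = 0$ is a sensible clarification the paper leaves implicit, but it does not alter the substance of the argument.
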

\begin{proof}
Observe
$$
\begin{aligned}
\E\big[h(X) \cdot \I[X \in A]\big]  
&= \sum_x \Pr[X=x] h(x) \I[x \in A] \\
&= \sum_x \Pr[X=x, X \in A] h(x) \\
&= \Pr[X \in A] \sum_x \Pr[X=x \mid X \in A ] h(x) \\
&= \Pr[X \in A] \cdot \E[h(X) \mid X \in A].
\end{aligned}
$$
\end{proof}

\subsection{Optimal allocation of samples for \discount to disjoint regions}

\begin{proof}[Proof of Claim~\ref{claim:optimal-sample-allocation}]
The proof is similar to that of Theorem 5.6 in \cite{cochran1977sampling}.
We prove the claim for $k=2$; the proof generalizes to larger $k$ in an obvious way. The variance of \discount on $S_i$ is
\[
\Var(\hat F_{\DIS}(S_i)) = \frac{G(S_i)^2 \cdot \sigma^2(S_i)}{n_i} = \sigma^2 \frac{G(S_i)^2 }{n_i}.
\]
We want to minimize $\sum_i \Var(\hat F_{\DIS}(S_i))$, which with $k=2$ is proportional to
\[
V = \frac{G(S_1)^2}{n_1} + \frac{G(S_2)^2}{n_2}.
\]
By the Cauchy-Shwarz inequality, for any $n_1, n_2 > 0$,
\[
Vn = \left(\frac{G(S_1)^2}{n_1} + \frac{G(S_2)^2}{n_2}\right)(n_1 + n_2) \geq \left(G(S_1) + G(S_2) \right)^2.
\]
If we substitute $n_i = G(S_i)/Z$ for any $Z$ on the left of the inequality and simplify, we see the inequality becomes tight, so the minimum is achieved. We further require $\sum_i n_i = n$, so choose $Z$ so 
\[
n_i = n\cdot\frac{G(S_i)}{G(\Omega)} = n p(S_i).
\]
\end{proof}

\subsection{$k$-\discount variance}

Recall that $p(S) = \Pr[s_i \in S] = G(S)/G(\Omega)$ is the probability of a sample landing in $S$ under the sampling distribution $\bar g_{\Omega}$. 
Define
$$
\sigma^2(S) = \Var(f(s_i)/g(s_i) \mid s_i \in S) = \sum_{s \in S} \frac{g(s)}{G(S)} \cdot \left(\frac{f(s)}{g(s)} - \frac{F(S)}{G(S)}\right)^2.
$$ 
to be the variance of the importance weight for $s_i \sim \bar g_S$.

\begin{claim}
\label{claim:variance}
Let $r = 1 - p(S)$. The variance of the $k$-\discount estimator is given by
$$
\Var(\hat F_{\kDIS}(S)) = 
G(S)^2 \cdot \sigma^2(S) \cdot (1-r^n) \cdot \E\left[ \frac{1}{n(S)} \,\bigg\vert\, n(S) > 0\right] + F(S)^2 \cdot r^n\cdot(1-r^n).
$$
where
$(1-r)^n \E\left[ 1/n(S) \,\big\vert\, n(S) > 0\right] = \sum_{j=1}^n (1/j) \cdot \text{\normalfont Binomial}\big(j; n, p(S)\big)$.
\end{claim}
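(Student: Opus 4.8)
The plan is to apply the law of total variance, conditioning on the random sample count $n(S)$:
\[
\Var(\hat F_{\kDIS}(S)) = \E\big[\Var(\hat F_{\kDIS}(S) \mid n(S))\big] + \Var\big(\E[\hat F_{\kDIS}(S) \mid n(S)]\big).
\]
This cleanly separates the two terms in the claimed formula: the ``between'' term will produce $F(S)^2 r^n(1-r^n)$ and the ``within'' term will produce the $G(S)^2\sigma^2(S)$ piece.

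For the between-groups term, I would first invoke Claim~\ref{claim:conditional-unbiasedness}, which gives $\E[\hat F_{\kDIS}(S)\mid n(S)=m]=F(S)$ for every $m>0$, while $\hat F_{\kDIS}(S)=0$ when $m=0$. Hence $\E[\hat F_{\kDIS}(S)\mid n(S)] = F(S)\cdot\I[n(S)>0]$, a scaled Bernoulli variable. Since $\Pr[n(S)>0]=1-r^n$ with $r=1-p(S)$, its variance is $F(S)^2\cdot(1-r^n)r^n$, which is exactly the second term.

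For the within-groups term, the key step is to establish the conditional sampling structure: given $n(S)=m>0$, the $m$ samples that land in $S$ are i.i.d.\ draws from the conditional density $\bar g_S$. This follows from the i.i.d.\ structure of the draws together with exchangeability --- conditioning on the value $n(S)=m$ (rather than on which particular indices land in $S$) does not alter the conditional law of the in-region samples, since the index set $\{i:s_i\in S\}$ is uniform over $m$-subsets and the in-region values are i.i.d.\ $\bar g_S$ for each such subset. Granting this, $\bar w(S)$ is an average of $m$ i.i.d.\ weights, each with mean $F(S)/G(S)$ and variance $\sigma^2(S)$ (by the definition of $\sigma^2(S)$), so $\Var(\hat F_{\kDIS}(S)\mid n(S)=m)=G(S)^2\sigma^2(S)/m$ for $m>0$ and $0$ for $m=0$. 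Taking the expectation over the Binomial law of $n(S)$ yields
\[
\E\big[\Var(\hat F_{\kDIS}(S)\mid n(S))\big] = G(S)^2\sigma^2(S)\sum_{m=1}^n \frac{1}{m}\,\text{Binomial}(m;n,p(S)).
\]

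Finally, I would identify this sum with $G(S)^2\sigma^2(S)(1-r^n)\,\E[1/n(S)\mid n(S)>0]$ by writing $\E[1/n(S)\mid n(S)>0]=\big(\sum_{m=1}^n (1/m)\Pr[n(S)=m]\big)/\Pr[n(S)>0]$ and substituting $\Pr[n(S)>0]=1-r^n$; this simultaneously verifies the stated identity for $(1-r^n)\E[1/n(S)\mid n(S)>0]$. The main obstacle is the conditional i.i.d.\ claim in the within-groups step: it is intuitively clear but must be made precise via the exchangeability argument, essentially the same reasoning used in the proof of Claim~\ref{claim:conditional-unbiasedness} both to decouple $s_i$ from $n(S)$ and to evaluate $\E[w_i\mid s_i\in S]=F(S)/G(S)$.
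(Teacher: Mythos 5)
Your proposal is correct and follows essentially the same route as the paper: the law of total variance conditioned on $n(S)$, with Claim~\ref{claim:conditional-unbiasedness} giving the between-groups term $F(S)^2 r^n(1-r^n)$ and the conditional variance $G(S)^2\sigma^2(S)/n(S)$ averaged over the $\text{Binomial}(n,p(S))$ law giving the within-groups term. If anything, you are more explicit than the paper on the one subtle point --- that given $n(S)=m$ the in-region samples are i.i.d.\ draws from $\bar g_S$, which the paper asserts ``from the definition of $k$-\discount'' here and only justifies by a similar exchangeability argument in the proofs of Claims~\ref{claim:conditional-unbiasedness} and~\ref{claim:asymptotic-distribution}.
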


The second term in the variance arise from the possibility that no samples land in $S$; it decays exponentially in $n$ and is negligible compared to the first term.
The first term can be compared to the variance $G(S)^2 \cdot \sigma^2(S) \cdot \frac{1}{m}$ of importance sampling with exactly $m$ samples allocated to $S$ and the proposal distribution $\bar g_S$, i.e., \discount.
Because the sample size $n(S)$ is random, the correct scaling factor for $k$-\discount is $(1-r^n) \E[1/n(S) \mid n(S) > 0]$, which it turns out is asymptotically equivalent to $1/(np(S))$, i.e., \discount with a sample size of $m = n p(S) = \E[n(S)]$ --- see Claim~\ref{claim:asymptotic-variance} below. 
We find that for a small expected sample size (around 4) there can be up to 30\% ``excess variance'' due to the randomness in the number of samples (see Figure~\ref{fig:excess_variance}), but that this disappears quickly with larger expected sample size.
\begin{claim}
\label{claim:asymptotic-variance}
Let $\hat F_{\kDIS,n}$ and $\hat F_{\DIS,m}$ be the $k$-\discount and \discount estimators with sample sizes $n$ and $m$, respectively.
The asymptotic variance of $k$-\discount is given by
$$
\lim_{n \to \infty} n \Var(\hat F_{\kDIS,n}(S))  = G(S)^2 \cdot \sigma^2(S)/p(S).
$$
This is asymptotically equivalent to \discount with sample size $m = \E[n(S)] = n p(S)$.
That is
$$
\lim_{n \to \infty} \frac{\Var(\hat F_{\kDIS,n}(S))}{\Var(\hat F_{\DIS,\lceil n p(S) \rceil}(S))} = 1.
$$
\end{claim}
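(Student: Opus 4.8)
The plan is to derive both limits directly from the exact variance expression in Claim~\ref{claim:variance}, reducing everything to the asymptotics of a single scaling factor. Writing $r = 1-p(S)$ and $X = n(S)$, which is $\mathrm{Binomial}(n,p(S))$ distributed, that formula gives $\Var(\hat F_{\kDIS,n}(S)) = G(S)^2\sigma^2(S)\cdot c_n + F(S)^2 r^n(1-r^n)$, where $c_n = (1-r^n)\,\E[1/X \mid X>0]$. Multiplying by $n$, the second term contributes $nF(S)^2 r^n(1-r^n)$, which vanishes because $nr^n \to 0$ for $r<1$. Hence part one reduces to showing $n c_n \to 1/p(S)$, and since $(1-r^n)\to 1$ this is equivalent to $n\,\E[1/X\mid X>0] \to 1/p(S)$.

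The crux is this inverse-moment limit. I would rewrite $n\,\E[1/X\mid X>0] = \E[n/X \mid X>0]$ and note that $\Pr[X>0]=1-r^n \to 1$, so it suffices to show $\E[(n/X)\,\I[X>0]] \to 1/p(S)$. I would split on the concentration event $A_n = \{|X-np(S)| \le \delta\, np(S)\}$ for a fixed small $\delta>0$. On $A_n$ the variable $n/X$ lies in $[\tfrac{1}{(1+\delta)p(S)}, \tfrac{1}{(1-\delta)p(S)}]$ and satisfies $|n/X - 1/p(S)| \le \tfrac{\delta}{(1-\delta)p(S)}$, so $\E[(n/X)\,\I[A_n]] = \tfrac{1}{p(S)}\Pr[A_n] + O(\delta)$. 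Off $A_n$ but with $X\ge 1$, I would use the crude bound $n/X \le n$ together with a multiplicative Chernoff bound $\Pr[A_n^c] \le 2e^{-c\delta^2 np(S)}$, giving $\E[(n/X)\,\I[X>0,A_n^c]] \le n\Pr[A_n^c] \to 0$. Since $\Pr[A_n]\to 1$, letting $n\to\infty$ and then $\delta\to 0$ yields the limit $1/p(S)$. Substituting back gives $\lim_n n\Var(\hat F_{\kDIS,n}(S)) = G(S)^2\sigma^2(S)/p(S)$, which is part one. The main obstacle is precisely controlling the singularity of $1/X$ in the lower tail $X \to 0$; everything else is bounded convergence, and the Chernoff estimate is what makes $n\Pr[A_n^c]$ negligible despite the multiplicative factor $n$.

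For part two I would use that \discount with a fixed sample size $m$ has variance $\Var(\hat F_{\DIS,m}(S)) = G(S)^2\sigma^2(S)/m$, as already established in the proof of Claim~\ref{claim:optimal-sample-allocation}. Taking $m=\lceil np(S)\rceil$ and forming the ratio,
\[
\frac{\Var(\hat F_{\kDIS,n}(S))}{\Var(\hat F_{\DIS,\lceil np(S)\rceil}(S))} = \frac{\lceil np(S)\rceil}{G(S)^2\sigma^2(S)}\,\Var(\hat F_{\kDIS,n}(S)).
\]
By part one, $\Var(\hat F_{\kDIS,n}(S)) = \tfrac{G(S)^2\sigma^2(S)}{np(S)}(1+o(1))$, so the ratio equals $\tfrac{\lceil np(S)\rceil}{np(S)}(1+o(1))$, and since $\lceil np(S)\rceil/(np(S)) \to 1$ the ratio tends to $1$. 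This completes the claim.
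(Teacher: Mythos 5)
Your proposal is correct, and it shares the paper's overall skeleton: both start from the exact variance in Claim~\ref{claim:variance}, observe that the term $n\,F(S)^2 r^n(1-r^n)$ vanishes because $nr^n \to 0$, reduce part one to the asymptotics of the scaling factor $c_n = (1-r^n)\,\E[1/n(S) \mid n(S)>0]$, and handle part two by the identical ratio computation against $\Var(\hat F_{\DIS,m}(S)) = G(S)^2\sigma^2(S)/m$ with $m = \lceil n p(S)\rceil$. The genuine difference is at the key step: the paper disposes of $c_n$ by citing an asymptotic expansion of the conditional inverse moment of a binomial (Corollary 3 of \cite{znidarivc2009asymptotic}), which gives $c_n = \frac{1}{np(S)} + \mathcal{O}\bigl(1/(np(S))^2\bigr)$, whereas you prove the needed leading-order limit $n c_n \to 1/p(S)$ from scratch. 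Your argument is sound: the identity $n c_n = \E\bigl[(n/n(S))\,\I[n(S)>0]\bigr]$ is exact (not merely asymptotic, since $\Pr[n(S)>0] = 1-r^n$), the concentration event $A_n$ pins $n/n(S)$ within $O(\delta)$ of $1/p(S)$, and the lower tail --- precisely where the singularity of $1/n(S)$ would break a naive continuous-mapping or bounded-convergence step --- is handled by the crude bound $n/n(S) \le n$ together with the Chernoff estimate making $n\Pr[A_n^c] \to 0$; sending $n \to \infty$ and then $\delta \to 0$ is the standard uniform-integrability-style conclusion. Your part two is also slightly cleaner logically, since it uses only the $o(1)$ statement of part one plus $\lceil np(S)\rceil/(np(S)) \to 1$, rather than re-invoking the expansion as the paper does. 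What each approach buys: yours is self-contained and elementary, requiring no external reference; the paper's is shorter and delivers a quantitative second-order error term $\mathcal{O}\bigl(1/(np(S))^2\bigr)$, which is stronger than the claim requires but underpins the paper's finite-sample discussion of ``excess variance'' (Figure~\ref{fig:excess_variance}), information your limit-only argument does not recover.
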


\begin{figure}
\centering
\includegraphics[width=0.6\textwidth]{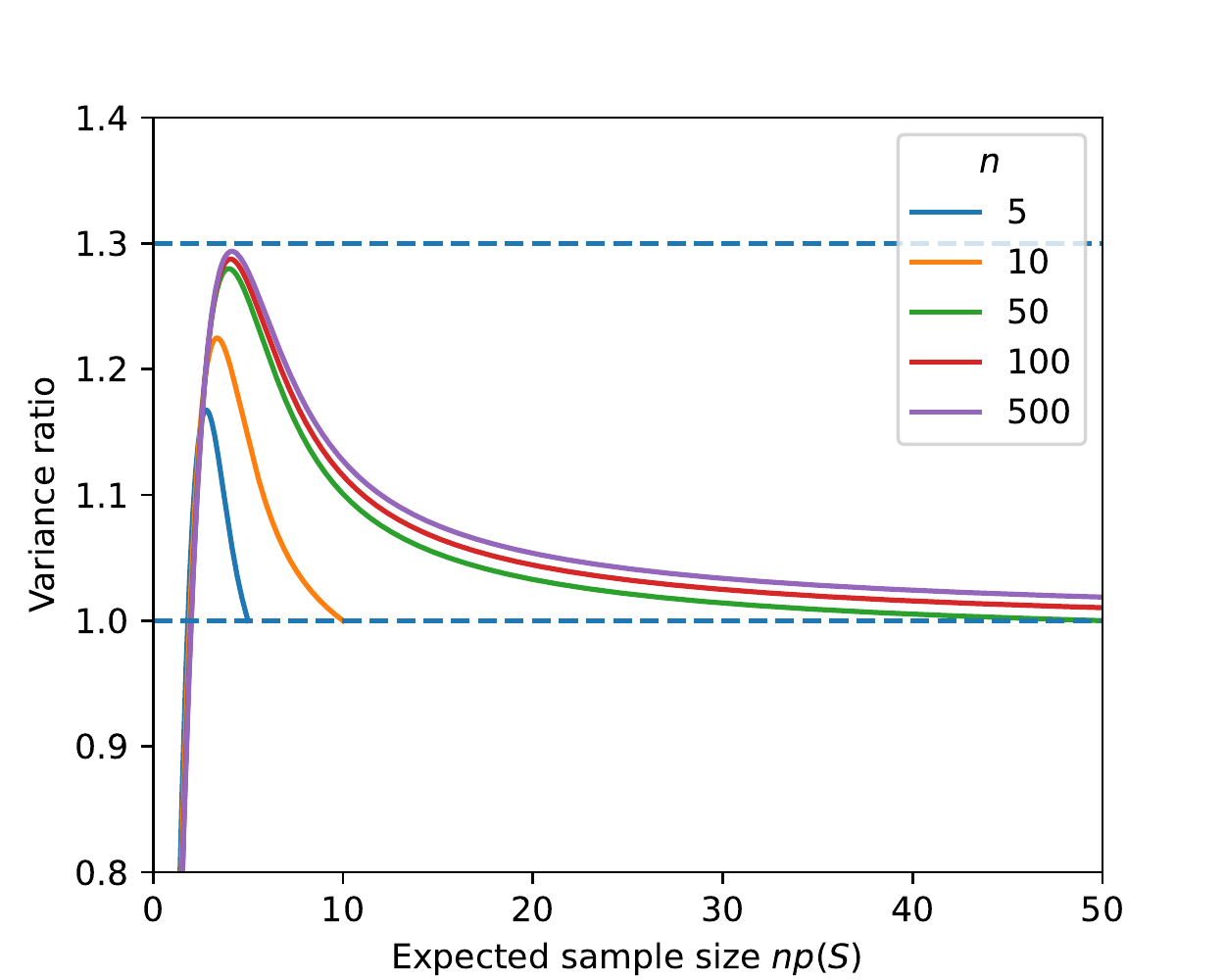}
\caption{Ratio of variance of $k$-\discount with expected sample size $n p(S)$ to \discount with $n p(S)$ samples; uses formula from Claim~\ref{claim:variance} (first term only).\label{fig:excess_variance}}
\end{figure}

\begin{proof}[Proof of Claim~\ref{claim:variance}]
By the law of total variance,
\begin{equation}
\label{eq:total-variance}
\Var(\hat F_{\kDIS}(S)) = \E \big[\Var(\hat F_{\kDIS}(S) \mid n(S))\big] + \Var \big(\E[\hat F_{\kDIS}(S) \mid n(S)]\big).
\end{equation}
We will treat each term in Eq.~\ref{eq:total-variance} separately. For the first term, from the definition of $k$-\discount we see
$$
\Var\big(\hat F_{\kDIS}(S) \mid n(S)) =
\begin{cases}    
0  & n(S)=0 \\
G(S)^2 \cdot \frac{\sigma^2(S)}{n(S)} & n(S) > 0
\end{cases}.
$$
Therefore
$$
\begin{aligned}
\E \big[\Var(\hat F_{\kDIS}(S) \mid n(S))\big] 
&= G(S)^2 \cdot \Pr[n(S) > 0] \E\left[ \frac{\sigma^2(S)}{n(S)} \,\bigg\vert\, n(S) > 0\right] \\
&= G(S)^2 \cdot \sigma^2(S) \cdot (1-r^n) \cdot \E\left[ \frac{1}{n(S)} \,\bigg\vert\, n(S) > 0\right].
\end{aligned}
$$
In the last line, we used the fact that $n(S) \sim \text{Binomial}(n, p(S))$, so $\Pr[n(S) > 0] = 1-r^n$ where $r = 1-p(S)$. The summation for $(1-r^n)\E[1/n(S) \mid n(S) > 0]$ follows from the same fact.

For the second term in Eq.~\eqref{eq:total-variance}, from the definition of $k$-\discount and conditional unbiasedness (Claim~\ref{claim:conditional-unbiasedness}), we have
$$
\begin{aligned}
\E[\hat F_{\kDIS}(S) \mid n(S)] &= 
\begin{cases}
0 & \text{ if }n(S)=0 \\[5pt]
F(S) & \text{ if } n(S) > 0
\end{cases} \\[5pt]
&= F(S) \cdot \text{Bernoulli}( 1-r^n).
\end{aligned}
$$
The variance is therefore
$$
\Var \big(\E[\hat F_{\kDIS}(S) \mid n(S)]\big) = F(S)^2 \cdot r^n \cdot (1-r^n).
$$
Putting the two terms together yields the result.
\end{proof}

\begin{proof}[Proof of Claim~\ref{claim:asymptotic-variance}]
By Claim~\ref{claim:variance} we have
\begin{multline}
\lim_{n \to \infty} 
n \Var(\hat F_{\kDIS,n}(S)) = 
\lim_{n \to \infty} n \cdot G(S)^2 \cdot \sigma^2(S) \cdot (1-r^n) \cdot \E\left[ \frac{1}{n(S)} \,\bigg\vert\, n(S) > 0\right] + \\
\lim_{n \to \infty} n \cdot F(S)^2 \cdot r^n\cdot(1-r^n).
\end{multline}
The second limit on the right side is zero, because $n r^n \to 0$ as $n \to \infty$ (recall that $r < 1$) and the other factors are bounded.
We will show the first limit on the right side is equal to $G(S)^2 \cdot \sigma^2(S) / p(S)$, which will prove the first part of the result.
The asymptotic expansion of \cite{znidarivc2009asymptotic} (Corollary 3) states that
$$
(1-r^n) \E[1/n(S) \mid n(S) > 0] = \frac{1}{n p(S)} + \mathcal O\left(\frac{1}{(n p(S))^2}\right).
$$
Using this expansion in the limit gives:
$$
\begin{aligned}
\lim_{n \to \infty} n \cdot G(S)^2 \cdot \sigma^2(S) \cdot (1-r^n) \cdot &\E\left[ \frac{1}{n(S)} \,\bigg\vert\, n(S) > 0\right]  \\
&= \lim_{n \to \infty} G(S)^2 \cdot \sigma^2(S) / p(S) \cdot ( 1 + \mathcal O(1/n) )\\
&= G(S)^2 \cdot \sigma^2(S) / p(S)
\end{aligned}
$$
The variance of $\discount$ with sample size $m$ is $\Var(\hat F_{\DIS, m}(S)) = G(S)^2 \cdot \sigma^2(S) / m$. 
Setting $m = \lceil n p(S)\rceil$ and using the second to last line above we have
$$
\begin{aligned}
\lim_{n \to \infty} \frac{\Var(\hat F_{\kDIS,n}(S))}{\Var(\hat F_{\DIS,\lceil n p(S) \rceil}(S))} 
&= \lim_{n \to \infty} \frac{n \Var(\hat F_{\kDIS,n}(S))}{n \Var(\hat F_{\DIS,\lceil n p(S) \rceil}(S))}  \\
&= \lim_{n \to \infty} \frac{G(S)^2 \cdot \sigma^2(S)/p(S) \cdot (1 + \mathcal O(1/n))}{n \cdot G(S)^2 \cdot \sigma^2(S) / \lceil n p(S) \rceil} \\
&= 1.
\end{aligned}
$$
\end{proof}

\subsection{Control Variates}

Recall that with control variates the weight is redefined as
\[
w_i = (f(s_i) - h(s_i))/g(s_i).
\]
The expectation of the weight given $s_i \in S$ is
\[
\E[w_i \mid s_i \in S] 
= \sum_{s \in S} \frac{g(s)}{G(S)} \frac{f(s) - h(s)}{g(s)} = \frac{F(S)-H(S)}{G(S)}
\]
Therefore
\[
\E[\bar w_{cv}(S) \mid n(S) > 0] = \frac{F(S)-H(S)}{G(S)} 
\]
Therefore
\[
\E[\hat F_{\kDIS cv}(S) \mid n(S) > 0] = G(S) \cdot \frac{F(S)-H(S)}{G(S)} + H(S) = F(S)
\]

\subsection{Confidence Intervals}

\begin{proof}[Proof of Claim~\ref{claim:asymptotic-distribution}]
Let $w_1, w_2, \ldots$ be an iid sequence of importance weights for samples in $S$, i.e., $w_i = f(s_i)/g(s_i)$ for $s_i \sim \bar g_S$. Each weight $w_i$ has mean $F(S)/G(S)$ and variance $\sigma^2(S)$. Let $\bar \omega_n = \frac{1}{n}\sum_{i=1}^n w_i$. By the central limit theorem,
$$
\sqrt{n}(\bar \omega_n - F(S)/G(S)) \toD \mathcal N(0, \sigma^2(S))
$$
Recall that $\hat F_{\kDIS,n}(S) = G(S) \cdot \bar w_n(S)$ where $\bar w_n(S)$ is the average of the importance weights for samples that land in $S$ when drawn from all of $\Omega$ (for clarity in the proof we add subscripts for sample size to all relevant quantities). It is easy to see that $\bar w_n(S)$ is equal in distribution to $\bar \omega_{n(S)}$ where $n(S) \sim \text{Binomial}(n, p(S))$ and $n(S)$ is independent of the sequence of importance weights --- this follows from first choosing the number of samples that land in $S$ and then choosing their locations conditioned on being in $S$. From Theorem 3.5.1 of \cite{van1996weak} (with $N_n = n(S)$ and $c_n = n p(S)$) it then follows that
$$
	\sqrt{n(S)}\big(\bar w_n(S) - F(S)/G(S)\big) \toD  \mathcal N(0, \sigma^2 (S))
$$
Rearranging yields
$$
\frac{\hat F_{\kDIS}(S) - F(S)}{G(S)/\sqrt{n(S)}} \toD  \mathcal N(0, \sigma^2 (S))
$$After dividing by $\hat \sigma_n(S)$, the result follows from Slutsky's lemma if $\hat \sigma^2_n(S) \toP \sigma^2(S)$, which follows from a similar application of Theorem 3.5.1 of \cite{van1996weak}.
\end{proof}

\section{Counting tasks}\label{app:countingtasks}

\begin{figure}[H]
    \centering
    \includegraphics[width=\linewidth]{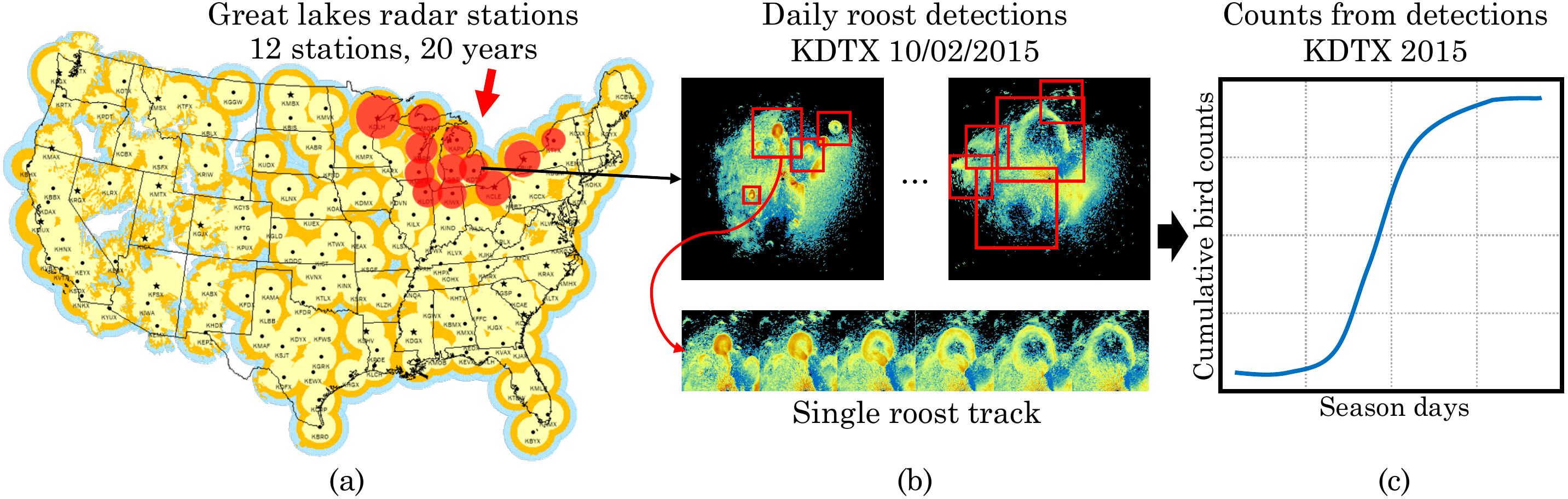}\\
    \vspace{10pt}
    \includegraphics[width=\linewidth]{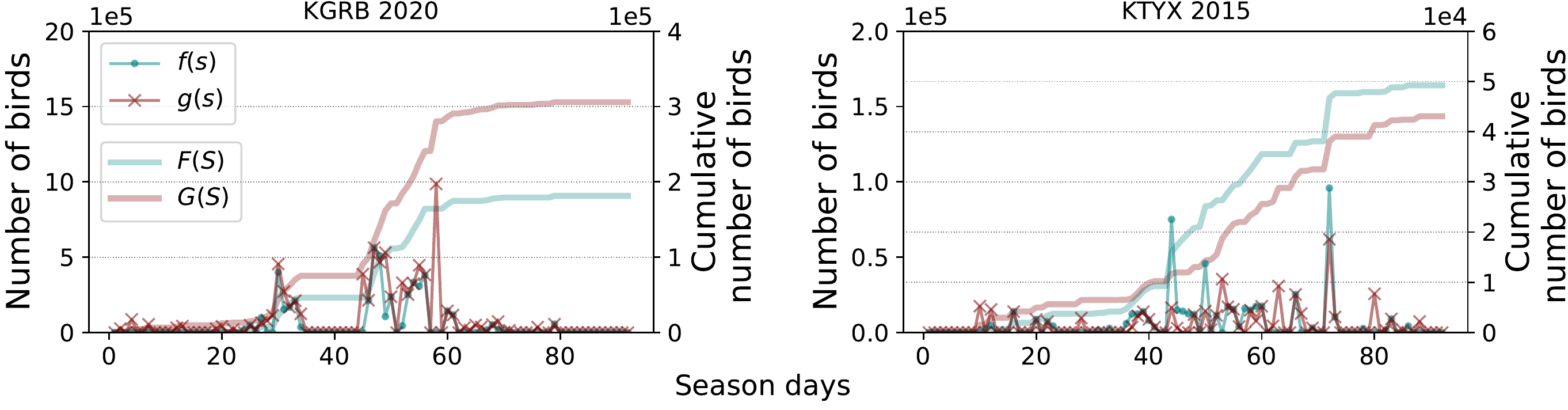}
    \vspace{-10pt}
    \caption{\textbf{Counting roosting birds in radar images.} \textbf{(a)} The US weather radar network has collected data for 30 years from 143+ stations and provides an unprecedented opportunity to study long-term and wide-scale biological phenomenon such as roosts. \textbf{(b)} Counts are collected for each day $s$ by running the detector using all radar scans for that day to detect and track roost signatures and then mapping to bird counts using the radar ``reflectivity'' within the tracks. The figure shows two scans for the KDTX station (Detroit, MI) on the same day, along roost detections which appear as expanding rings. By tracking these detections across a day one can estimate the number of birds in each roost. \textbf{(c)} Cumulative bird counts in the complete roosting season by aggregating counts across all tracked roosts and days. 
    \textbf{(bottom)} Examples of \emph{true} bird counts (blue) and detector counts (red) during a roosting season for station KGRB 2020 and KTYX 2015. 
    }
    \label{fig:radar}
\end{figure}

\begin{figure}[H]
    \centering
    \includegraphics[width=\linewidth]{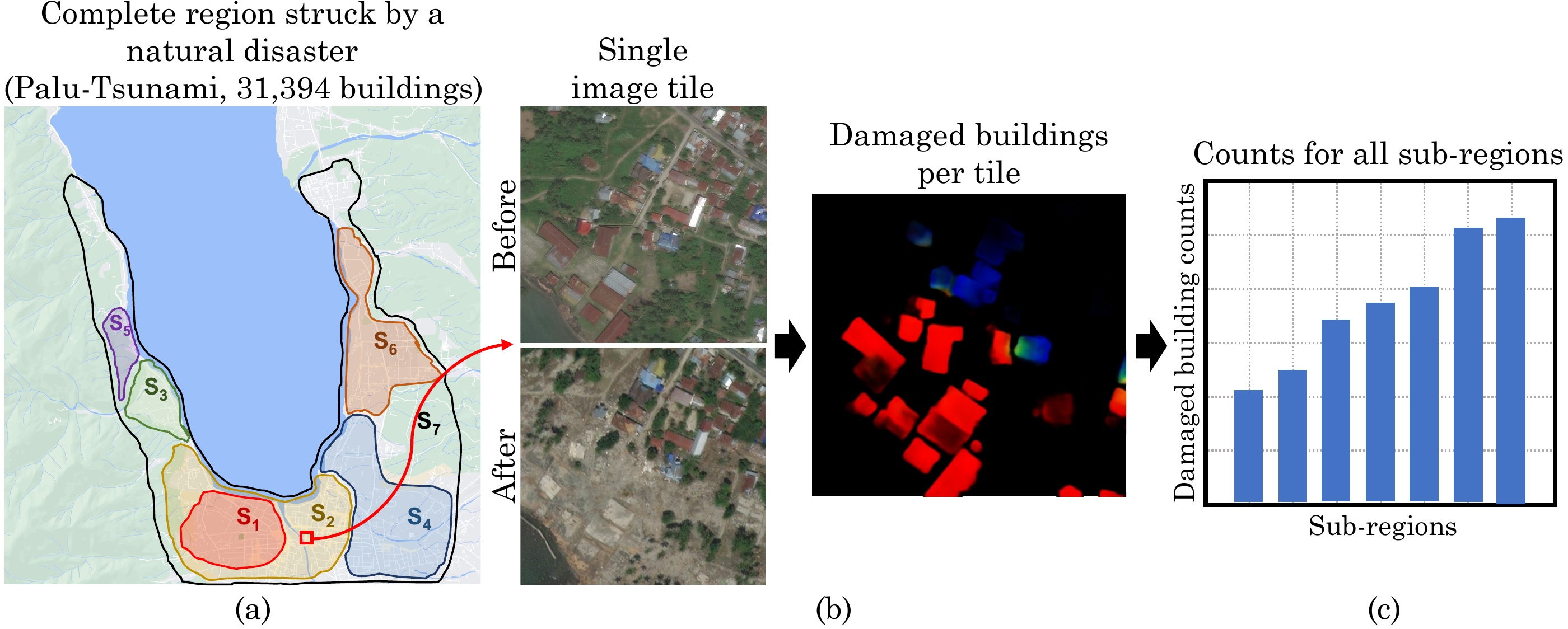}
    \vspace{-10pt}
    \caption{\textbf{Counting damaged buildings in satellite images.} Building damage assessment from satellite images~\cite{su2018,deng2022} is often used to plan humanitarian response after a natural disaster strikes. \textbf{(a)} We consider the Palu Tsunami from 2018; the data consists of 113 high-resolution satellite images labeled with 31,394 buildings and their damage levels. \textbf{(b)} Counts are collected per tile using before- and after-disaster satellite images. Colors indicate different levels of damage (e.g., red: "destroyed"). \textbf{(c)} Damaged building counts per sub-region. }
    \label{fig:app:buildings}
\end{figure}

\clearpage
\section{Palu Tsunami regions}\label{app:regions}

\begin{figure}[H]
    \centering
    \begin{tabular}[t]{c c}
    \includegraphics[width=0.385\linewidth]{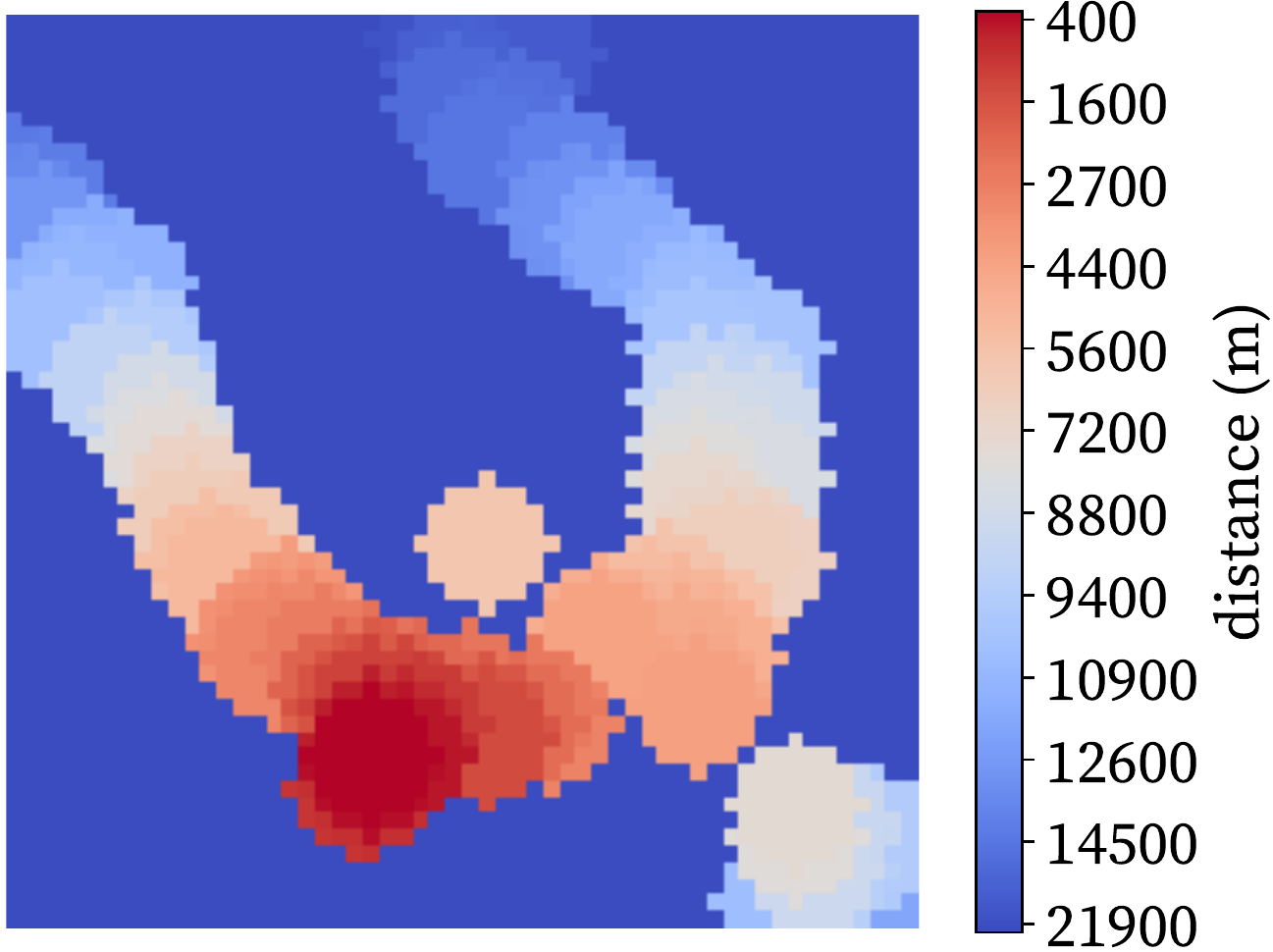} &
    \includegraphics[width=0.35\linewidth]{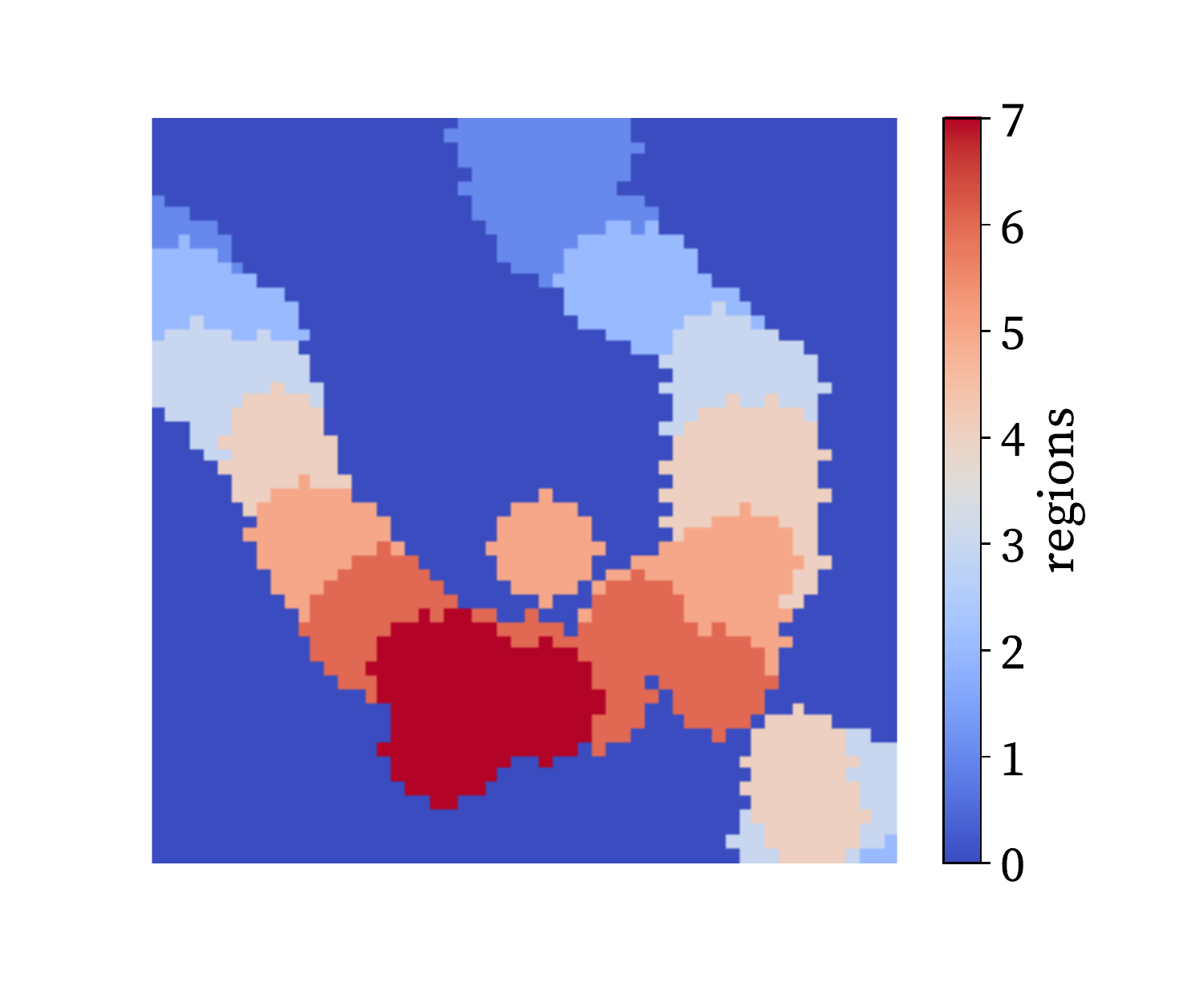} 
    \end{tabular}
    \caption{(left) Distance from the area with the most damaged buildings. (right) Regions defined for damaged building counting (See \S~\ref{sec:exp:buildings}). 
    }
    \label{fig:buildings_regions} 
\end{figure}

\end{document}